\theoremstyle{plain}
\newtheorem{theorem}{Theorem}[section]
\newtheorem{lemma}[theorem]{Lemma}
\theoremstyle{definition}
\newtheorem{definition}[theorem]{Definition}
\theoremstyle{remark}
\newtheorem{remark}[theorem]{Remark}
\icmltitlerunning{Riemann Tensor Neural Networks: Learning Conservative Systems with Physics-Constrained Networks}
\begin{document}

\twocolumn[
\icmltitle{Riemann Tensor Neural Networks: Learning Conservative Systems with Physics-Constrained Networks}
% It is OKAY to include author information, even for blind
% submissions: the style file will automatically remove it for you
% unless you've provided the [accepted] option to the icml2025
% package.

% List of affiliations: The first argument should be a (short)
% identifier you will use later to specify author affiliations
% Academic affiliations should list Department, University, City, Region, Country
% Industry affiliations should list Company, City, Region, Country

% You can specify symbols, otherwise they are numbered in order.
% Ideally, you should not use this facility. Affiliations will be numbered
% in order of appearance and this is the preferred way.
\icmlsetsymbol{equal}{*}

\begin{icmlauthorlist}
\icmlauthor{Anas Jnini}{comp}
\icmlauthor{Lorenzo Breschi}{comp}
\icmlauthor{Flavio Vella}{comp}

\icmlcorrespondingauthor{Anas Jnini}{anas.jnini@unitn.it}
\end{icmlauthorlist}

\icmlaffiliation{comp}{ Department of Information Engineering and Computer Science, University of Trento, Trento, Italy}

% You may provide any keywords that you
% find helpful for describing your paper; these are used to populate
% the "keywords" metadata in the PDF but will not be shown in the document
\icmlkeywords{Machine Learning, ICML}

\vskip 0.3in
]

% this must go after the closing bracket ] following \twocolumn[ ...

% This command actually creates the footnote in the first column
% listing the affiliations and the copyright notice.
% The command takes one argument, which is text to display at the start of the footnote.
% The \icmlEqualContribution command is standard text for equal contribution.
% Remove it (just {}) if you do not need this facility.

%\printAffiliationsAndNotice{}  % leave blank if no need to mention equal contribution
\printAffiliationsAndNotice{} % otherwise use the standard text.
\begin{abstract}
Divergence-free symmetric tensors (DFSTs) are fundamental in continuum mechanics, encoding conservation laws such as mass and momentum conservation. We introduce Riemann Tensor Neural Networks (RTNNs), a novel neural architecture that inherently satisfies the DFST condition to machine precision, providing a strong inductive bias for enforcing these conservation laws. We prove that RTNNs can approximate any sufficiently smooth DFST with arbitrary precision and demonstrate their effectiveness as surrogates for conservative PDEs, achieving improved accuracy across benchmarks. This work is the first to use DFSTs as an inductive bias in neural PDE surrogates and to explicitly enforce the conservation of both mass and momentum within a physics-constrained neural architecture.
\end{abstract}

\section{Introduction }
\paragraph{Partial Differential Equations (PDEs)} 
Partial Differential Equations (PDEs) are central to the mathematical modeling of complex physical systems, including fluid dynamics, thermodynamics, and material sciences. Traditional numerical methods, such as finite element and spectral methods, often require fine discretization of the physical domain to achieve high accuracy. These approaches can become computationally expensive, particularly in engineering applications where systems must be solved repeatedly under varying parameters or initial conditions. Recent advances in machine learning (ML) have shown promise in addressing these challenges by leveraging neural networks (NNs) as potential alternatives or enhancements to traditional numerical solvers \cite{kovachki2021neural, li2020fourier}.
\paragraph{Physical Inductive Biases in Machine Learning.}
A central limitation of generic neural models is their lack of built-in physical intuition. While convolutional or attention-based layers successfully exploit certain data symmetries (e.g., translation invariance), they do not automatically enforce fundamental physics, such as mass conservation or energy preservation. Physics-Informed Neural Networks (PINNs) \cite{lagaris1998artificial, raissi2019physics,cai2021physics,haghighat2021physics,hu2023tackling} address this gap by adding PDE residuals and boundary conditions as soft constraints in the loss function. PINNs have been successfully deployed on many PDE problems \cite{karniadakis2021physics,jnini2024GNNG}, but the “soft penalty” approach can lead to suboptimal enforcement of conservation laws and stiff optimization \cite{wang2021understanding}. Consequently, there is growing interest in \emph{hard} or \emph{explicit} constraints that guarantee PDE structure \emph{a priori} \cite{richter2022neural,greydanus2019hamiltonianneuralnetworks, cranmer2020lagrangianneuralnetworks,jnini2024physicsconstrained,liu2024harnessingpowerneuraloperators}. 
\paragraph{Divergence-Free Symmetric Tensors in Physics and Mathematics.}
Divergence-free symmetric tensors (DFSTs) are a special class of tensor fields characterized by vanishing row-wise divergence and inherent symmetry in their indices. In an $(n+1)$-dimensional space-time domain, where $n$ represents the spatial dimensions and the additional dimension accounts for time, these tensors frequently arise as stress or momentum flux tensors in various fields, including fluid dynamics, elasticity, kinetic theory, and relativistic hydrodynamics \cite{serre2018divergence}. For instance, the Navier–Stokes stress tensor or the compressible flux matrix can be represented as an $(n+1) \times (n+1)$ tensor $\mathcal{S}$ satisfying $\nabla\cdot \mathcal{S} = 0$. This formulation unifies the conservation of mass and momentum under a single flux-divergence constraint. Despite their natural alignment with PDE-based conservation laws, existing studies have not explored leveraging DFST structures \emph{directly} as a neural inductive bias within neural network architectures.

\paragraph{Our Contributions.}
In this work, we present a novel approach to embedding fundamental conservation laws directly into neural network architectures through DFSTs . Our primary contributions are as follows:

\begin{enumerate}
    \item \textbf{Architectural Design of Riemann Tensor Neural Networks (RTNNs) :}
    We introduce RTNNs , a class of neural architectures specifically designed to generate DFST fields. RTNNs are tailored for approximating individual DFSTs, ensuring the divergence-free condition, $\nabla \cdot \mathcal{S} = 0$, is satisfied to machine precision. 
    
    \item \textbf{Theoretical Guarantees for RTNNs:}
    We establish theoretical foundations for RTNNs by proving their universal approximation capabilities for any sufficiently smooth DFST. 
    
 \item \textbf{Empirical Validation and Comparative Analysis:}
    We reformulate several benchmark problems within the DFST framework and conduct numerical experiments that demonstrate RTNNs consistently improve performance of PINNs in accuracy when used as surrogate models for conservative PDEs.

    \end{enumerate}
In the following sections, we review DFST-based PDE formulations, describe the proposed neural architectures, and present experimental validations on benchmark problems. 
\subsection{Related works}

\paragraph{Divergence-Free Symmetric Tensors in Mathematical Physics.}
A large body of work by \cite{serre1997euler,serre2018divergence,serre2019compensated,serre2021hardspheres} has established the fundamental importance of DFSTs in continuum mechanics and kinetic theory. These tensors encode conservation principles for mass and momentum (in classical fluid dynamics) or energy--momentum (in relativistic hydrodynamics), and are present in models ranging from Euler or Boltzmann equations to mean-field (Vlasov--Poisson) descriptions of plasmas and galaxies.
Although DFSTs have been investigated in PDE theory, prior investigations have largely focused on analytic or qualitative properties . To the best of our knowledge, no existing work leverages DFSTs explicitly as a numerical method or as an architectural inductive bias in machine learning frameworks. 

\paragraph{Hard-Constraints in Scientific Machine Learning.}
Beyond the classical physics-informed approach of adding PDE residuals as soft constraints in the loss \cite{raissi2019,karniadakis2021physics}, there is growing interest in incorporating \emph{hard} constraints or specialized structures into neural networks. For instance, \cite{hendriks2020} investigate linearly constrained networks, \cite{richter2022neural} impose continuity-equation constraints via divergence-free vector fields, and several recent methods aim to preserve energy or momentum \cite{greydanus2019,cranmer2020lagrangianneuralnetworks}.
These efforts reflect a broader push in machine learning to embed \emph{domain-specific priors}, thereby improving stability and generalization \cite{lecun1998,giles1987}. Our work similarly encodes the PDE structure ``at the network level'' via DFST, which ensures strict conservation and symmetry. To the best of our knowledge, this work is the first to use enforce the conservation of both mass and momentum at the architectural level for surrogate modeling.

\section{Background and Theory}
\paragraph{Notation (Preliminaries)}
 Let $n$ denote the spatial dimension and $\Omega \subset \mathbb{R}^n$ the spatial domain. The space-time domain is $\Omega_T = [0,T] \times \Omega \subset \mathbb{R}^{n+1}$, where $t \in [0,T]$ and $\mathbf{x} \in \Omega$. For a function $f(t,\mathbf{x})$, the augmented gradient is $\nabla f = \left(\partial_t f, \partial_{x_1} f, \dots, \partial_{x_n} f\right)$, and the spatial gradient is $\widetilde{\nabla} f = \left(\partial_{x_1} f, \dots, \partial_{x_n} f\right)$.
\paragraph{Divergence-Free Symmetric Tensors in Continuum Mechanics.}
We begin by introducing the class of \emph{divergence-free symmetric tensors}, which encode either the conservation of mass and momentum in classical mechanics or energy and momentum in special relativity. A tensor field
\[
S : \Omega_T \;\to\; \mathbb{R}^{(n+1)\times (n+1)}
\]
is said to be \emph{symmetric} if \(S = S^\top\), and \emph{divergence-free} if it satisfies
\[
\operatorname{Div}_{t,x}(S)_i := \partial_t s_{i0} + \sum_{j=1}^n \partial_{x_j} s_{ij} = 0, \quad \forall i \in \{0, \dots, n\}.
\]

Many classical PDE systems, including compressible or incompressible fluid flow, elasticity, and shallow-water models, can be expressed in this flux-divergence form by suitably choosing \(S\). A canonical representation in fluid mechanics is:
\begin{align}
\label{eq:s_tens_def}
S 
\;=\;
\begin{pmatrix}
\rho & m^\top \\
m & \dfrac{m \otimes m}{\rho} \;+\; \sigma
\end{pmatrix},
\end{align}
where \(\rho : \Omega_T \to \mathbb{R}\) denotes the mass density, \(m : \Omega_T \to \mathbb{R}^n\) the linear momentum field, and \(\sigma : \Omega_T \to \mathbb{R}^{n\times n}\) the stress tensor. Enforcing \(\operatorname{Div}_{t,x}(S)=0\) then unifies mass and momentum conservation, while additional constraints (e.g., constitutive laws or energy equations) can further specify \(\sigma\) or couple \(\rho\) and \(m\).
\paragraph{Motivation for Neural Parametrization.}

Although one can penalize the residuals of the condition \(\operatorname{Div}_{t,x}(S) = 0\) in a soft-constraint manner (e.g., through terms in the loss function), this approach does not guarantee the satisfaction of the divergence-free condition, especially when optimization is challenging or regularization terms are underweighted; this is often the case in the non-linear regime that we are considering, as described in \cite{bonfanti2024the}.

Instead, we propose embedding the divergence-free property directly into the neural network architecture. A primary motivation for this architectural choice, beyond the immediate benefit of ensuring strict conservation, is to more fundamentally address how coupled physical quantities are represented. In many physical systems, such as those in fluid dynamics, variables like density, velocity, and pressure are not independent but are deeply interrelated. Conventional neural network models that treat these as separate output channels can struggle to capture these intrinsic physical and geometric correlations—for instance, the interdependence between velocity components or the unified response of density and velocity to pressure gradients. By designing our network to output a DFST, we aim to inherently model these quantities as components of a single, unified field, linked directly by the underlying conservation laws. Consequently, such a hard-coded constraint ensures strict conservation (e.g., of mass and momentum) to machine precision, while also providing better physical consistency and a stronger inductive bias.

\paragraph{Proposed Approach.}
The following sections introduce a neural-network-based construction that guarantee the output is a divergence-free symmetric tensor. We prove that our approach can approximate any sufficiently smooth DFST to arbitrary accuracy, thus offering a robust way to integrate conservation principles into neural PDE solvers.

\subsection{Constructing Divergence-Free Symmetric Tensors on a Flat Manifold (DFSTs)}
\label{sec:basis_DFST}

\begin{theorem}[Representation of Divergence-Free Symmetric Tensors on a Flat Manifold]
\label{thm:DFSTsRepresentation}
Let \(V\) be an \(n\)-dimensional real vector space with a fixed basis \(\{e_a\}_{a=1}^n\), and let \(\{e_a^*\}_{a=1}^n\) denote the corresponding dual basis of \(V^*\). Let \(\Lambda^2 V^*\) denote the space of 2-forms on \(V\). Consider the space of all \((0,4)\)-tensors \(K_{abcd}\) defined on a flat manifold equipped with a Levi-Civita connection \(\nabla\), satisfying the following symmetries:
\begin{enumerate}
    \item \textbf{Antisymmetry within index pairs:}
    \begin{align}
        K_{(ab)cd} &= 0, \label{eq:antisym_pair1} \\
        K_{ab(cd)} &= 0, \label{eq:antisym_pair2}
    \end{align}
    \item \textbf{Symmetry between pairs:}
    \begin{equation}
        K_{abcd} = K_{cdab}. \label{eq:symmetry_between_pairs}
    \end{equation}
\end{enumerate}

Let \(\{\omega_1, \dots, \omega_m\}\) be a fixed basis of \(\Lambda^2 V^*\), where \(m = \frac{n(n-1)}{2}\). Define the tensors:
\begin{align}
T^{(i,j)}_{abcd} := \omega_i(e_a^* \wedge e_b^*) \omega_j(e_c^* \wedge e_d^*) + \omega_j(e_a^* \wedge e_b^*) \omega_i(e_c^* \wedge e_d^*),
\end{align}
where \(e_a^* \wedge e_b^*\) is the wedge product of the dual basis elements \(e_a^*\) and \(e_b^*\).

The space of divergence-free symmetric tensors \(S_{ab}\) on the flat manifold is the image of the map:
\begin{align}
S_{ab} = \nabla^c \nabla^d K_{acbd},
\end{align}
where \(K_{abcd}\) is any \((0,4)\)-tensor satisfying the symmetries \eqref{eq:antisym_pair1}–\eqref{eq:symmetry_between_pairs}. Any \(S_{ab}\) can be expressed as:
\begin{align}
S_{ab} = \sum_{1 \leq i \leq j \leq m} T^{(i,j)}_{acbd} \nabla^c \nabla^d c_{ij},
\end{align}
where \(c_{ij}\) are smooth scalar functions.
\end{theorem}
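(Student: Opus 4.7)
The plan is to split the theorem into an easy verification half and a harder surjectivity half, with the basis expansion as a bridge. First, for any $K$ satisfying \eqref{eq:antisym_pair1}--\eqref{eq:symmetry_between_pairs}, I would check directly that $S_{ab}:=\nabla^c\nabla^d K_{acbd}$ is symmetric and divergence-free. Symmetry follows by writing $S_{ba}=\nabla^c\nabla^d K_{bcad}$, applying the pair-exchange symmetry $K_{bcad}=K_{adbc}$, relabelling the dummy indices $c\leftrightarrow d$, and then using that $\nabla^c$ and $\nabla^d$ commute on a flat manifold. The divergence-free property is even quicker: $\nabla^a S_{ab}=\nabla^a\nabla^c\nabla^d K_{acbd}$ vanishes because $\nabla^a\nabla^c$ is symmetric in $a,c$ while $K_{acbd}$ is antisymmetric in those indices by \eqref{eq:antisym_pair1}; the vanishing of $\nabla^b S_{ab}$ uses \eqref{eq:antisym_pair2} in the same way.

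Next I would establish the explicit basis expansion. At each point the tensors satisfying \eqref{eq:antisym_pair1}--\eqref{eq:symmetry_between_pairs} form exactly $\mathrm{Sym}^2(\Lambda^2 V^*)$: the two antisymmetries place $K$ in $\Lambda^2 V^*\otimes\Lambda^2 V^*$, and the pair-exchange symmetry restricts to the symmetric subspace of this tensor square. A dimension count gives $m(m+1)/2$, matching the cardinality of $\{T^{(i,j)}:1\le i\le j\le m\}$, and linear independence can be read off directly from the explicit wedge expressions. Hence any admissible $K$ can be written as $K_{abcd}(x)=\sum_{i\le j}c_{ij}(x)\,T^{(i,j)}_{abcd}$ for smooth scalars $c_{ij}$. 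Because the $T^{(i,j)}$ are built from the fixed basis $\{e_a^*\}$ they have constant components, so covariant derivatives pass through them and yield $S_{ab}=\sum_{i\le j}T^{(i,j)}_{acbd}\,\nabla^c\nabla^d c_{ij}$, which is the second displayed formula.

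The main obstacle, and the only nontrivial analytic step, is surjectivity: every smooth DFST $S$ must actually arise from some admissible $K$. My preferred route is Fourier analysis on $\mathbb{R}^{n+1}$, exploiting flatness and local contractibility of the underlying manifold. In frequency space the DFST condition becomes the purely algebraic constraint $\xi^a\hat S_{ab}(\xi)=0$ for each $\xi\neq 0$, while the image of $K\mapsto \xi^c\xi^d K_{acbd}$ is a linear subspace of symmetric matrices characterized algebraically; one verifies that these two subspaces coincide by exhibiting a pointwise inverse, built by antisymmetrizing $\hat S_{ab}/|\xi|^4$ in the required index pairs and subtracting off $\xi$-contractions to enforce all three symmetries simultaneously. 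Pulling back to physical space, the $|\xi|^{-4}$ factors become two inverse Laplacians which are absorbed into the scalar potentials $c_{ij}$. The delicate point is ensuring that all three symmetry conditions on $K$ hold at once without forcing extra constraints on $S$; a more differential-geometric alternative applies the Poincar\'e lemma column-by-column to the divergence-free rows of $S$ and then uses its symmetry to assemble the resulting antisymmetric potentials into a single $K$ with the Riemann-type symmetries.
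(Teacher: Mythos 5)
Your overall decomposition (easy verification half plus a surjectivity half, bridged by the $\mathrm{Sym}^2(\Lambda^2 V^*)$ classification) mirrors the paper exactly, and the verification and classification steps are essentially identical to the paper's Lemma~\ref{thm:PhiSurjectivity} (Part 1) and Lemma~\ref{lemma:classification_Riemann_like}. The surjectivity step, however, is where you take a genuinely different route. The paper does \emph{not} Fourier-transform: it introduces an intermediate potential $L_{ab}=\int G(x,x')\,S_{ab}(x')\,dx'$ with $G$ the Dirichlet Green's function of the Laplacian, shows by integration by parts that $L$ is symmetric and divergence-free with $\Delta L=S$, and then builds $K$ \emph{algebraically} from $L$ via the Kulkarni--Nomizu-type formula $K_{acbd}=\delta_{ab}L_{dc}-\delta_{ad}L_{bc}-\delta_{cb}L_{da}+\delta_{cd}L_{ba}$. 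The symmetries of $K$ are then a direct check, and $\nabla^c\nabla^d K_{acbd}=\Delta L_{ab}=S_{ab}$ because the divergence-free property of $L$ kills the three cross terms. Your Fourier ansatz can be made to work (e.g.\ $\hat K_{acbd}\propto(\xi_a\xi_b\hat S_{cd}-\xi_a\xi_d\hat S_{cb}-\xi_c\xi_b\hat S_{ad}+\xi_c\xi_d\hat S_{ab})/|\xi|^4$ has the three symmetries and contracts correctly using $\xi^a\hat S_{ab}=0$), but note two things the paper's route handles more cleanly: (i) your construction is naturally tied to $\mathbb{R}^n$ or the torus, whereas the Green's-function argument is stated for a bounded domain with Dirichlet boundary data and makes explicit where the boundary integral vanishes; and (ii) the paper's $K$ involves only one inverse Laplacian applied to $S$, with the metric $\delta$ supplying the extra index pair, rather than the $|\xi|^{-4}$ of your sketch, giving one order less smoothing and a simpler intermediate object $L$. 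Your column-by-column Poincar\'e-lemma alternative is plausible but left entirely unworked; in the paper it plays no role.
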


\begin{proof}
The proof is presented in Appendix ~\ref{proofs}.
\end{proof}
\subsection{Riemann Tensor Neural Network}
\label{sec:RTNN}

In the setting where we wish to approximate a divergence-free symmetric tensor field 
\begin{align}
   S \colon \Omega \;\to\; \mathbb{R}^{n\times n},
\end{align}
we define a \textbf{Riemann Tensor Neural Network} as follows.

\begin{definition}[Riemann Tensor Neural Network\footnotemark]
\label{def:RTNN}
Suppose:
\begin{itemize}
  \item \(\Omega \subset \mathbb{R}^n\) is a flat domain (manifold without boundary or with suitable BCs),
  \item \(\{T^{(i,j)}_{abcd}\}\) is a finite \textit{non-trainable} basis of Riemann-like \((0,4)\)-tensors as defined in  Theorem~\ref{thm:DFSTsRepresentation},
  \item \(\mathrm{NN}_\theta : \mathbb{R}^k \to \mathbb{R}^{\tfrac{m(m+1)}{2}}\) is a multilayer perceptron (MLP) with twice-differentiable activations, whose input \(x \in \mathbb{R}^k\) indexes points in \(\Omega\).  
\end{itemize}
Then an \emph{RTNN} for the single-field case is constructed by:
\begin{enumerate}
\item \textbf{Scalar coefficients \(\{c_{ij}(x;\theta)\}\)}:  
  The MLP $\mathrm{NN}_\theta$ outputs $\frac{m(m+1)}{2}$ scalar functions $\{\,c_{ij}(x;\theta)\}_{1\le i\le j\le m}$.  

\item \textbf{Hessian Computation:}  
  For each \(c_{ij}(x;\theta)\), compute the Hessian components \(\partial^c\partial^d\,c_{ij}(x;\theta)\) via automatic differentiation.

\item \textbf{Tensor Field Construction:}  
  Define the \(\mathbf{(0,2)}\)-tensor field
  \begin{align}
    S_\theta(x)
    \;=\;
    \sum_{1 \,\le i \,\le j \,\le m}
       T^{(i,j)}_{acbd}
       \,\partial^c\partial^d \,c_{ij}(x;\theta).
  \end{align}
  By design, \(S_\theta(x)\) is row-wise divergence-free and symmetric for all \(x \in \Omega\).
\end{enumerate}

We call $S_\theta$ a \textbf{Riemann Tensor Neural Network}. It provides a parametric approximation $S_\theta \approx S$ to a single DFST on $\Omega$.
\end{definition}

\footnotetext{So named because the underlying \((0,4)\)-tensors share index symmetries with the Riemann curvature tensor in differential geometry.}
\begin{remark}
    Although the network outputs scale as \( m(m-1) \), the problem setup is generally overparameterized. Depending on the application, certain scalar functions can be set to zero without violating the DSFT condition, provided that the number of basis functions exceeds the degrees of freedom.
\end{remark}

\subsection{Universal Approximation Theorem for RTNN}
\label{sec:universal_approximation}

\begin{theorem}[Universal Approximation for RTNN]
\label{thm:UniversalApproximationRTNN_C2}
Let \(\Omega \subset \mathbb{R}^n\) be a bounded (hence compact) domain, and suppose \(S\colon \Omega \to \mathbb{R}^{n\times n}\) is a \(\mathbf{C}^2\)-smooth, divergence-free, symmetric tensor field on \(\Omega\).  Then for any \(\varepsilon>0\), there exists a \emph{Riemann Tensor Neural Network} \(S_\theta\) such that
\begin{align}
  \sup_{x\in \Omega}\;\| S(x) - S_\theta(x)\|_{\mathrm{Fro}} \;<\;\varepsilon,
\end{align}
where \(\|\cdot\|_{\mathrm{Fro}}\) denotes the Frobenius norm on \(\mathbb{R}^{n\times n}\). In particular, \(S_\theta\) also remains divergence-free and symmetric on \(\Omega\).
\end{theorem}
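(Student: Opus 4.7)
The plan is to reduce the tensor-approximation problem to a classical scalar universal approximation statement via the representation of Theorem~\ref{thm:DFSTsRepresentation}, and then push that approximation through the fixed linear differential operator that defines the RTNN. The three ingredients are: (i) existence of smooth scalar potentials for the target DFST, (ii) a $C^2$-norm universal approximation theorem for MLPs with smooth activations, and (iii) continuity of the ``potentials $\mapsto$ tensor'' map.

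First I would apply Theorem~\ref{thm:DFSTsRepresentation} to the target $S$ to obtain scalar functions $\{c_{ij}\}_{1\le i\le j\le m}$ with
\begin{align}
S_{ab}(x) \;=\; \sum_{1\le i\le j\le m} T^{(i,j)}_{acbd}\,\partial^c\partial^d c_{ij}(x), \qquad x\in\Omega.
\end{align}
Because $S\in C^2(\Omega)$ and the representation loses two derivatives, one expects $c_{ij}\in C^2(\Omega)$ (in the worst case one first mollifies $S$ to a $C^\infty$ DFST $S^\eta$ with $\sup_x\|S-S^\eta\|_{\mathrm{Fro}}<\varepsilon/2$, solves the representation for $S^\eta$ with smooth potentials, and absorbs the mollification error into the final estimate). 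Either way I obtain potentials whose Hessians are continuous on the compact set $\Omega$.

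Next I would invoke a standard universal-approximation theorem in the $C^2(\Omega)$-topology, e.g.\ the Hornik--Pinkus result for MLPs whose activation is non-polynomial and twice continuously differentiable (as required in Definition~\ref{def:RTNN}). For any $\delta>0$, there exists a parameter vector $\theta$ such that a single MLP $\mathrm{NN}_\theta$ with $\tfrac{m(m+1)}{2}$ outputs $c_{ij}(\,\cdot\,;\theta)$ satisfies
\begin{align}
\max_{i\le j}\;\sup_{x\in\Omega}\;\sum_{|\alpha|\le 2}\bigl|\partial^\alpha\bigl(c_{ij}(x)-c_{ij}(x;\theta)\bigr)\bigr| \;<\; \delta.
\end{align}
Now the RTNN construction in Definition~\ref{def:RTNN} is a fixed linear map
\begin{align}
L\colon \bigl(C^2(\Omega)\bigr)^{m(m+1)/2} \;\longrightarrow\; C^0\bigl(\Omega,\mathbb{R}^{n\times n}\bigr),\qquad L\bigl(\{c_{ij}\}\bigr) \;=\; \sum_{i\le j} T^{(i,j)}_{acbd}\,\partial^c\partial^d c_{ij},
\end{align}
whose operator norm is bounded by a constant $C_T$ depending only on the fixed basis $\{T^{(i,j)}\}$ and $n$. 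Writing $S = L(\{c_{ij}\})$ and $S_\theta = L(\{c_{ij}(\cdot;\theta)\})$, the triangle inequality and the $C^2$ bound give
\begin{align}
\sup_{x\in\Omega}\|S(x)-S_\theta(x)\|_{\mathrm{Fro}} \;\le\; C_T\cdot\tfrac{m(m+1)}{2}\cdot\delta,
\end{align}
so choosing $\delta$ small enough yields the claimed $\varepsilon$-bound. Divergence-freeness and symmetry of $S_\theta$ hold for every $\theta$ by construction, since $L$ takes values in the space of DFSTs (this is precisely the content of Theorem~\ref{thm:DFSTsRepresentation}).

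The main obstacle I expect is step one: making sure the scalar potentials $c_{ij}$ can be chosen smooth enough on $\Omega$ so that the $C^2$-density result applies and produces uniform convergence of second derivatives, not merely of values. If Theorem~\ref{thm:DFSTsRepresentation} only asserts existence of $c_{ij}$ with matching regularity to $S$, the cleanest fix is the mollification detour outlined above, which requires a mild extension argument near $\partial\Omega$ (for instance, extending $S$ to a slightly larger open set and restricting back). Once this regularity is in hand, the rest of the proof is a routine combination of classical MLP approximation and continuity of $L$.
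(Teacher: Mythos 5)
Your proof is correct and, importantly, takes a route that is \emph{more careful than the paper's own argument}. The paper's appendix proof invokes a $C^0$-type universal approximation result (the deep-narrow theorem of Kidger and Lyons) to make $\sup_x|c_{ij}-c_{ij}^{\mathrm{NN}}|$ small, and then writes the estimate
\[
\sup_{x\in\Omega}\bigl|\nabla^c\nabla^d(c_{ij}-c_{ij}^{\mathrm{NN}})\bigr|\;\le\;C_{ij}\,\sup_{x\in\Omega}\bigl|c_{ij}-c_{ij}^{\mathrm{NN}}\bigr|,
\]
which is not a valid inequality: a uniformly small approximant can have arbitrarily large Hessian, so a $C^0$ approximation bound gives no control over the second derivatives that the RTNN actually differentiates. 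You correctly replace this with a universal approximation theorem in the $C^2(\Omega)$ topology (Hornik--Pinkus for smooth non-polynomial activations), which yields uniform control over all derivatives up to order two simultaneously, and you then push this through the fixed bounded linear map $L$ defined by the constant tensor basis $\{T^{(i,j)}\}$. This is precisely the structure needed to make the error propagation legitimate, so your approach closes the gap rather than merely restating it.

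Two small remarks. First, your aside that ``the representation loses two derivatives, one expects $c_{ij}\in C^2$'' has the direction of regularity gain reversed: since $S_{ab}=\nabla^c\nabla^d K_{acbd}$, the potentials $c_{ij}$ are obtained by solving a Poisson problem for $S$, so on a nice domain they gain two derivatives ($c_{ij}\in C^4$ when $S\in C^2$). This only helps you — $C^4\subset C^2$ — and your mollification fallback makes the step bulletproof regardless. Second, the paper's Definition~\ref{def:RTNN} uses a single MLP emitting all $\tfrac{m(m+1)}{2}$ outputs, while your proposal (like the paper's appendix proof) allows one network per $c_{ij}$; this is cosmetic, since $\tfrac{m(m+1)}{2}$ networks can be concatenated into one wider network without changing the class.

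Overall: same decomposition and same reduction to scalar approximation of the potentials, but a genuinely different and correct choice of approximation theorem at the crucial step. This is the right fix.
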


\begin{proof}
We present the proof in Appendix~\ref{proof urtnn}
\end{proof}

\section{Methodology and Applications}

The preceding sections established the theoretical foundation of divergence-free symmetric tensors DFSTs and RTNNs as a rigorous approach to enforcing conservation laws within neural architectures. In the following section, we move from theory to application, showcasing how RTNNs can be employed to model various conservative systems, including the Euler and Navier-Stokes equations, and Magneto-Hydrodynamics(MHD).

\subsection{Efficient Implementation and Practical Considerations}
\label{subsec:EfficientImplementation}

\paragraph{Automatic Differentiation} 
We employ Taylor-Mode Automatic Differentiation, which propagates Taylor coefficients through the network by treating the computational graph as an augmented network with weight sharing. This approach effectively reduces redundant computations associated with higher-order derivatives, significantly accelerating the training process of RTNNs. Additionally, for Magneto-Hydrodynamics in \cref{subsec:mhd}, we utilize Separable Physics-Informed Neural Networks (SPINNs). SPINNs decompose PDE residuals into per-axis evaluations, facilitating efficient differential operations on large-scale regular grids \cite{cho2023separablephysicsinformedneuralnetworks}.
\paragraph{Optimization and Stability} 
Our method models densities and momenta instead of velocity fields, velocity recovery involves dividing by \(\rho\), leading to instability when \(\rho\) is initialized around a small value \cite{richter2022neural}. To address this, we add an identity matrix to \( \mathcal{S}_\theta \), ensuring \(\rho\) is initialized  near 1 without violating DFST constraints. Throughout the experiments in this paper, we use the Least-memory BFGS \cite{nocedal1999numerical} optimizer due to the highly non-linear nature of our problems. Additionally, LBFGS efficiently approximates second-order curvature information, facilitating effective optimization in the complex, non-linear loss landscapes encountered in our experiments. The challenges of optimizing in such regimes and the importance of second-order optimizers have been well documented in the literature \cite{jnini2024GNNG,müller2024positionoptimizationscimlemploy,bonfanti2024the}.

\paragraph{Code implementation and public repository} 
Our code has been implemented using the JAX library \cite{jax2018github}. Our implementation is publicly available at \url{https://github.com/HicrestLaboratory/Riemann-Tensor-Neural-Networks}.

\subsection{Pedagogic Example: 2D Isentropic Euler Vortex}
\label{subsec:2D_Euler_Vortex_NonPeriodic}
For this pedagogic example, we simulate a 2D isentropic Euler vortex—a smooth, rotational flow solution to the Euler equations that accurately captures vortex dynamics and is commonly used as a standard benchmark for evaluating the accuracy of numerical solvers—over a spatial domain \( \Omega = [0, L_x] \times [0, L_y] \) and a time interval \( [0, T] \). We employ well-defined analytical initial and boundary conditions that are detailed in Appendix~\ref{sec:appendix_setup}.

Let \( \rho > 0 \) denote the density, \((u, v)\) the velocity field, and \(p\) the pressure. For isentropic flow with \(\gamma > 1\), the pressure is given by \(p(\rho) = \kappa \rho^{\gamma}\). The governing equations, including the energy equation, are:

\paragraph{Governing Equations}
The 2D compressible Euler equations are:
\begin{align}
    &\partial_t \rho + \tilde{\nabla} \cdot (\rho u, \rho v) = 0, \quad \label{eq:mass_conservation} \\
    &\partial_t (\rho u) + \tilde{\nabla} \cdot \big(\rho u^2, \rho u v\big) = -\partial_x p, \quad \label{eq:momentum_x} \\
    &\partial_t (\rho v) + \tilde{\nabla} \cdot \big(\rho u v, \rho v^2\big) = -\partial_y p, \quad \label{eq:momentum_y} \\
    &\partial_t E + \tilde{\nabla} \cdot \big((E + p) u, (E + p) v\big) = 0, \quad \label{eq:energy_conservation}
\end{align}
where \(E = \frac{p}{\gamma - 1} + \frac{1}{2} \rho (u^2 + v^2)\) is the total energy.

\paragraph{DFST Formulation and Decomposition of \(\sigma\).}
Rewriting \eqref{eq:mass_conservation}--\eqref{eq:momentum_y} in flux-divergence form, we express the system as:
\[
\nabla_{t,x,y} \cdot \mathcal{S} = 0, \quad \text{where}
\]
\[
\mathcal{S} = 
\begin{pmatrix}
\rho & \rho u & \rho v \\
\rho u & \frac{(\rho u)^2}{\rho} + \sigma_{xx} & \frac{(\rho u)(\rho v)}{\rho} \\
\rho v & \frac{(\rho v)(\rho u)}{\rho} & \frac{(\rho v)^2}{\rho} + \sigma_{yy}
\end{pmatrix}.
\]
Here, the stress tensor \(\sigma\) is:
\[
\sigma = p \mathbf{I} + \underbrace{0}_{\text{deviatoric part}},
\]
where \(p = p(\rho)\) represents the isotropic pressure contribution. The absence of a deviatoric term reflects the assumption of inviscid flow. The divergence-free condition \(\nabla \cdot \mathcal{S} = 0\) enforces:
\begin{itemize}
    \item mass conservation, and
    \item momentum conservation in \(x\)- and \(y\)-directions.
\end{itemize}

Additionally, the energy equation is given separately as:
\[
\partial_t E + \tilde{\nabla} \cdot \Big((E + p) u, (E + p) v\Big) = 0,
\]
where the total energy \(E\) is:
\[
E = \frac{p}{\gamma - 1} + \frac{1}{2} \rho (u^2 + v^2).
\]

\paragraph{RTNN Parametrization.}
To approximate solutions of \eqref{eq:mass_conservation}--\eqref{eq:energy_conservation}, we define a family of tensors \(\mathcal{S}_\theta\) using RTNNs . The parametrization proceeds as follows:

\begin{enumerate}
    \item \textbf{RTNN Parametrization of \(\mathcal{S}_\theta\):}  
    Let \(\mathcal{S}_\theta\) denote an RTNN as described in Section~\ref{sec:RTNN}. By construction, \(\mathcal{S}_\theta\) is \emph{divergence-free and symmetric} in \(\Omega_T\).

    \item \textbf{Extracting Physical Fields:}  
    We can interpret \(\mathcal{S}_\theta\) in block form. From it, we read off::
\[
\rho_\theta = (\mathcal{S}_\theta)_{0,0}, \quad
(\rho_\theta u_\theta, \rho_\theta v_\theta) = (\mathcal{S}_\theta)_{1:2,0},
\]
\[
\sigma_\theta = (\mathcal{S}_\theta)_{1:2,1:2} - \frac{(\rho_\theta u_\theta, \rho_\theta v_\theta) \otimes (\rho_\theta u_\theta, \rho_\theta v_\theta)}{\rho_\theta}.
\]
    \item \textbf{Zero-Deviatoric Constraint and Energy Parametrization:}  
    We can parametrize the pressure by decomposing the stress tensor into isotropic and deviatoric parts:
    \[
    \sigma_\theta = p_\theta \mathbf{I} + \sigma_\theta^{\mathrm{dev}},
    \]
    where:
    \begin{align}
        p_\theta &= \frac{1}{2} \mathrm{tr}(\sigma_\theta),  &       
  \sigma_\theta^{\mathrm{dev}}= \sigma_\theta - p_\theta \mathbf{I}.
    \end{align}
    Additionally, we parametrize the energy as:
    \[
    E_\theta = \frac{p_\theta}{\gamma - 1} + \frac{1}{2} \rho_\theta (u_\theta^2 + v_\theta^2).
    \]
    For an inviscid isentropic vortex, we enforce the constraint \(\sigma_\theta^{\mathrm{dev}} = 0\) during training.
\end{enumerate}
While the parametrized fields exactly satisfy the DSFT constraints, they are only solutions to the momentum equations if the stress tensor satisfies the zero deviatoric constraints, which we can penalize in the loss function in addition to the boundary and initial terms. 

We clarify the equivalence of the zero deviatoric constraint with the momentum residual in Appendix Section \ref{subsec:equivalenceisotropic}.

\paragraph{Loss Function.}
To train the RTNN and ensure that the modeled tensor \(\mathcal{S}_\theta\) adheres to the governing equations and boundary conditions, we define an objective function:
\[
\mathcal{L}(\theta) =
\mathcal{L}_{\text{BC}} + \mathcal{L}_{\text{IC}} + \mathcal{L}_{\sigma} + \mathcal{L}_{E},
\]
where \(\mathcal{L}_{\mathrm{BC}}\) penalizes deviations from the prescribed boundary conditions while \(\mathcal{L}_{\mathrm{IC}}\) enforces consistency with initial conditions. The term \(\mathcal{L}_{\sigma}\) ensures that the stress tensor remains purely isotropic by penalizing the magnitude of the deviatoric component, \(\|\sigma_\theta^{\mathrm{dev}}\|^2\), it's inclusion is equivalent to penalizing the momentum equation residual as shown in in \ref{subsec:equivalenceisotropic}. Finally, \(\mathcal{L}_{E}\) minimizes the residual of the energy equation, measured as \(\|\partial_t E_\theta + \widetilde{\nabla} \cdot ((E_\theta + p_\theta)(u_\theta, v_\theta))\|^2\), ensuring that the total energy is properly conserved within the system. All loss terms are formulated in the least squares sense.

\paragraph{Experimental Setup.}  
For the neural network training, we sample \textbf{500 interior collocation points} within \( \Omega \times [0, T] \) to enforce the residual constraints of the governing PDE. Additionally, \textbf{100 boundary and initial condition points} are sampled to impose the prescribed constraints. Our RTNN model is parameterized by a \textbf{Multilayer Perceptron (MLP)} with \textbf{4 hidden layers}, each containing \textbf{50 neurons}. Training is performed entirely \textbf{without labeled data},the model is validated against the analytical solution of the isentropic Euler vortex to evaluate accuracy.

We benchmark RTNN against two  methods: (1) the standard PINN approach and (2) Neural Conservation Laws (NCL) that enforces exact mass conservation\cite{richter2022neural}. Both methods use similar MLP architectures for fairness. Performance is evaluated in terms of median average relative  \(L^2\) error on all fields and simulation time. We train all three methods using 200,000 iterations of the L-BFGS optimizer. We follow the loss scheme presented in this section, while training both PINN and NCL using PDE residuals penalized in the loss. 

\paragraph{Results and Discussion.}  
Table~\ref{tab:euler_label_comparison} summarizes the results, while Figure~\ref{fig:euler_label_experiment} presents the evolution of the relative  \(L^2\) error over simulation time. Our RTNN significantly outperforms both PINN and NCL, achieving a median relative  \(L^2\) error that is two orders of magnitude lower than PINN and four orders of magnitude lower than NCL. Furthermore, RTNN demonstrates stable convergence while maintaining competitive training times.
\begin{table}[t]
    \centering
    \begin{tabular}{|c|c|c|}
        \hline
        \textbf{Method} &   \textbf{Relative $L_2$ Error} & \textbf{Wall Time (s)} \\
        \hline
        \textbf{RTNN} &\textbf{9.92e-05} & 596.34 \\
        NCL & 3.87e-01 & 2008.57 \\
        PINN  & 3.82e-02 & \textbf{365.67} \\
        \hline
    \end{tabular}
    \caption{Comparison of methods for the Euler experiment, reporting the median relative $L_2$ error and median wall time across five independent training runs with different random seeds.}
    \label{tab:euler_label_comparison}
\end{table}

\begin{figure}[t]
    \centering
    \includegraphics[width=0.48\textwidth]{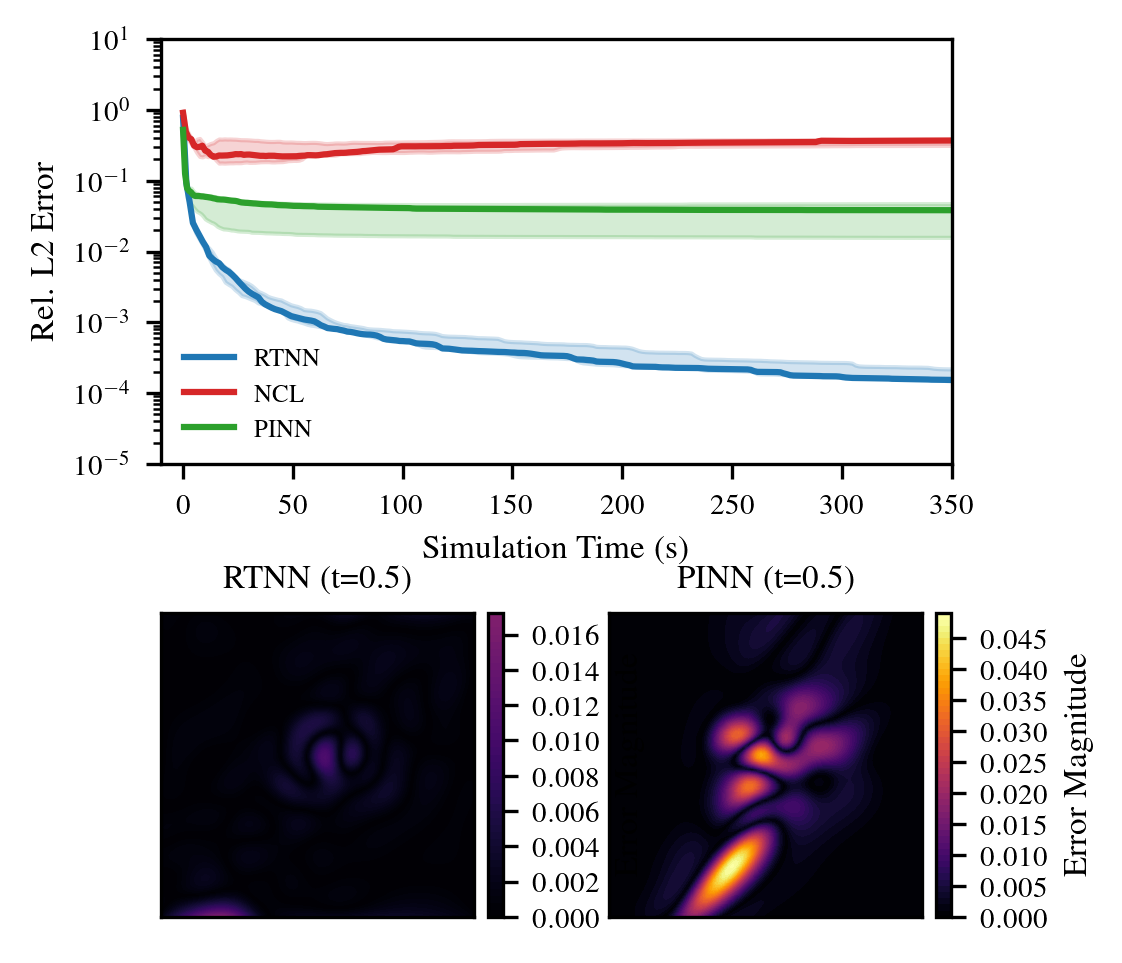}
    \caption{Comparison of training dynamics and error fields for RTNN, NCL, and PINN for the Euler experiment.
    \textbf{Top}: relative  \(L^2\) error evolution over simulation time, showing median (lines) and IQR (shaded). 
    \textbf{Bottom}: Error fields for RTNN (left) and PINN (right) 
    at $t=0.5$}
    \label{fig:euler_label_experiment}
\end{figure}

\begin{figure*}[t]
\centering
\includegraphics[width=\textwidth]{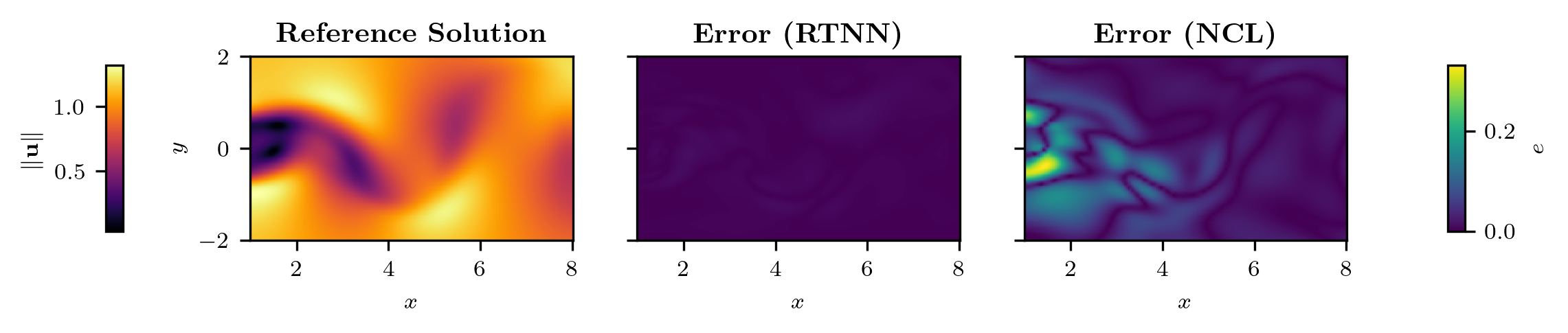}
\caption{Qualitative comparison of predicted velocity fields for the unsteady Cylinder Wake flow at \(Re=100\) and a representative time instance \(t=7.0\). Left: Reference solution showing the reference velocity magnitude \(\||\mathbf{u}\||\). Middle: Absolute error of the RTNN prediction, \(e = \| \mathbf{u}_{\text{Ref}} - \mathbf{u}_{\text{RTNN}} \|_2\). Right: Absolute error of the NCL prediction, \(e = \| \mathbf{u}_{\text{Ref}} - \mathbf{u}_{\text{NCL}} \|_2\). This visualization clearly demonstrates RTNN's superior accuracy in capturing the complex vortex shedding structures with significantly lower error compared to NCL.}
\label{fig:cylinder}
\end{figure*}

\subsection{Incompressible Navier-Stokes Equation}
\label{subsec:Incompressible_Navier_Stokes}

We next consider the incompressible Navier-Stokes equations, which include additional viscous forces compared to inviscid Euler flows.

\paragraph{Governing Equations.}
The incompressible Navier-Stokes system in the \(n\)-dimensional case is given by:

\begin{align}
    \tilde{\nabla} \cdot \mathbf{u} &= 0, \label{eq:incompressible_continuity} \\
    \partial_t \mathbf{u} + (\mathbf{u} \cdot \tilde{\nabla}) \mathbf{u} &= -\tilde{\nabla} p + \nu \Delta \mathbf{u}, \label{eq:incompressible_momentum}
\end{align}

where \(\mathbf{u} = (u, v, w)\) represents the velocity field, \(p\) denotes the pressure field, and \(\nu > 0\) is the kinematic viscosity. Equation~\eqref{eq:incompressible_continuity} enforces the incompressibility condition, ensuring that the divergence of the velocity field is zero. Meanwhile, equation~\eqref{eq:incompressible_momentum} balances the convective, pressure, and viscous forces within the fluid. The term \(\nu \Delta \mathbf{u}\) specifically models the internal fluid friction due to viscosity.

\paragraph{DFST Formulation and Stress Decomposition.}
These equations can be expressed in a divergence-free symmetric tensor (DFST) form. We define a tensor
\[
\mathcal{S} =
\begin{pmatrix}
1 & \mathbf{u}^\top \\
\mathbf{u} & \mathbf{u} \otimes \mathbf{u} + \sigma
\end{pmatrix},
\]
such that
\[
\nabla \cdot \mathcal{S} = 0.
\]
Here, the term \(\mathbf{u} \otimes \mathbf{u}\) represents the convective flux, while \(\sigma\) is the total stress tensor decomposed as
\[
\sigma = p \mathbf{I} + \sigma^{\mathrm{dev}},
\]
where the deviatoric part \(\sigma^{\mathrm{dev}}\) captures viscous stresses via
\[
\sigma^{\mathrm{dev}} = \nu \bigl(\tilde{\nabla} \mathbf{u} + (\tilde{\nabla} \mathbf{u})^\top \bigr).
\]

\paragraph{Exact incompressibility.}
To enforce exact incompressibility (i.e., \(S_{00} = 1\)), observe that any contributions to \(S_{00}\) come specifically from basis 2-forms containing \(e_0^*\). Consequently, by choosing the corresponding coefficients \(c_{ij}\) to vanish whenever the wedge product includes \(e_0^*\) in \(T_{abcd}\), we ensure that \(S_{00}\) consists only of the identity term we added for stability, thus achieving exact incompressibility.

\paragraph{RTNN Parametrization.}
Following Section~\ref{subsec:2D_Euler_Vortex_NonPeriodic}, we employ an RTNN \(\mathcal{S}_\theta\) to represent the solution. From its block structure, we extract the physical fields as:
\[
\rho_\theta = 1, \quad
(u_\theta, v_\theta, w_\theta) = (\mathcal{S}_\theta)_{1:n,0},
\]
\[
\sigma_\theta = (\mathcal{S}_\theta)_{1:n,1:n} - (u_\theta, v_\theta, w_\theta) \otimes (u_\theta, v_\theta, w_\theta).
\]
Here, \(\rho_\theta\) represents the density field, while \((u_\theta, v_\theta, w_\theta)\) correspond to the velocity components. 
The viscous stress tensor \(\mathcal{D}_\theta\) can be computed via automatic differentiation applied directly to the velocity field. Instead, we define:
\[
\mathcal{D}_\theta^{\mathrm{dev}} = \nu \left( \tilde{\nabla} \mathbf{u}_\theta + (\tilde{\nabla} \mathbf{u}_\theta)^\top \right).
\]
Although this introduces additional computational complexity, it can be mitigated using Taylor-mode automatic differentiation.
The stress tensor can be decomposed into an isotropic part and a deviatoric part:
\[
\sigma_\theta = p_\theta \mathbf{I} + \sigma_\theta^{\mathrm{dev}}, \quad p_\theta = \frac{1}{n} \mathrm{tr}(\sigma_\theta).
\]
\paragraph{Viscous Residual and Loss Function.}
To enforce momentum balance, we focus on matching the RTNN-derived deviatoric stress \(\sigma_\theta^{\mathrm{dev}}\) with the velocity-based viscous stress \(\nu(\nabla \mathbf{u}_\theta + (\nabla \mathbf{u}_\theta)^\top)\). Define:
\[
\mathcal{R}_{\sigma,\text{viscous}} 
\;=\;
\sigma_\theta^{\mathrm{dev}}
\;-\mathcal{D}_\theta
\]

Because pressure can act as a scalar offset in this formulation, ensuring correct deviatoric stresses is sufficient to satisfy the momentum equation. Consequently, our training objective can be written:
\[
\mathcal{L}(\theta) 
\;=\;
\bigl\|\mathcal{R}_{\sigma,\text{viscous}}\bigr\|_{\Omega_T}^2
\;+\;\mathcal{L}_{\mathrm{BC}}
\;+\;\mathcal{L}_{\mathrm{IC}}
\;+\;\mathcal{L}_{\mathrm{data}},
\]
where \(\mathcal{L}_{\mathrm{BC}}\) and \(\mathcal{L}_{\mathrm{IC}}\) enforce boundary and initial conditions, and \(\mathcal{L}_{\mathrm{data}}\) penalizes any available labeled measurements.  All loss terms are formulated in the least squares sense.

\paragraph{Experimental Setups.}
We validate our approach on three representative incompressible Navier--Stokes scenarios:

\begin{itemize}
\item  \textbf{3D Beltrami Flow}
We consider the three-dimensional Beltrami flow at a Reynolds number of \(\mathrm{Re} = 1\) to verify the accuracy of our RTNN framework. The computational domain is discretized using \textbf{2,601 interior collocation points} to enforce the PDE residuals, supplemented by \textbf{961 boundary and initial condition points} to impose the necessary constraints. An MLP with \textbf{4 hidden layers} and \textbf{50 neurons per layer} employing Tanh activation functions is utilized. The model is trained using \textbf{100,000 iterations} of the L-BFGS optimizer without any labeled data. Validation is performed against \textbf{26,000 interior points} sampled within the domain to assess the model's performance. Detailed setup information and error plots are provided in \cref{sec:apbeltrami}.

\item \textbf{Steady Flow around a NACA Airfoil}
This experiment investigates the steady laminar flow at \(\mathrm{Re} = 1000\) around a NACA 0012 airfoil. The steady-state problem is addressed by treating time as a dummy dimension set to zero in the forward pass. The computational domain is discretized with \textbf{40,000 collocation points} to enforce PDE residuals and boundary conditions, alongside \textbf{2,000 labeled data points} obtained from an in-house Reynolds-Averaged Navier-Stokes (RANS) solver to supervise the training. An MLP consisting of \textbf{4 hidden layers} and \textbf{50 neurons per layer} with Tanh activation functions is employed. The model undergoes \textbf{50,000 iterations} of the L-BFGS optimizer. Validation is conducted on the \textbf{14,000 collocation points} to evaluate accuracy. Further setup details and visual results are provided in \cref{sec:apnaca_airfoil}.

\item \textbf{Cylinder Wake}
We simulate a two-dimensional unsteady vortex-shedding flow at \(\mathrm{Re}=100\) around a circular cylinder centered at \((0,0)\). The computational domain is defined as \([1,8] \times [-2,2]\) with the time interval \([0,7]\), discretized in increments of \(\Delta t = 0.1\). The domain is discretized using \textbf{40,000 interior collocation points} to enforce the PDE residuals and \textbf{5,000 boundary and initial condition points} to apply the necessary constraints. A \textbf{4-layer, 50-neuron MLP} with Tanh activation functions is trained using \textbf{50,000 iterations} of the L-BFGS optimizer without any labeled data. Validation is performed using Direct Numerical Simulation (DNS) data from \citet{raissi2019} to assess the model's performance. Comprehensive setup details and error analyses are presented in \cref{sec:apcylinder_wake}.

\end{itemize}

\paragraph{Results and Discussion.}
Table~\ref{tab:three_experiments} summarize the performance of our RTNN formulation compared to NCL and standard PINNs by showing median average relative \(L^2\) error across all fields and wall times, while \cref{fig:airfoil,fig:beltramu} summarize the training dynamics for these experiments and \cref{fig:cylinder} provides a qualitative comparison for the cylinder wake. As observed in the table, our ansatz delivers improved accuracy while achieving comparable training times. 
Across all three test cases, the RTNN approach consistently yields lower relative errors, indicating its potential for robust, data-efficient modeling of incompressible Navier--Stokes flows for both self-supervised learning in the PINN manner and in in scarce-data scenarios. Furthermore, for the cylinder wake, \cref{fig:cylinder} visually demonstrates RTNN's superior capability in capturing the intricate vortex shedding structures. While the relative \(L^2\) error provides a global measure of accuracy, the qualitative results in the figure highlight that RTNN yields solutions that are particularly effective in the complex wake region, where capturing fine details is crucial, an aspect that can be averaged out in global error metrics.

\begin{table}[t]
    \centering
    \label{tab:three_experiments}
    \resizebox{\columnwidth}{!}{%
    \begin{tabular}{|l|c|c|c|c|c|c|}
    \hline
    & \multicolumn{2}{c|}{\textbf{Cylinder}} 
    & \multicolumn{2}{c|}{\textbf{Airfoil}} 
    & \multicolumn{2}{c|}{\textbf{Beltrami}} \\
    \textbf{Method}
    & \textbf{rL2 Error} & \textbf{Time (s)}
    & \textbf{rL2 Error} & \textbf{Time (s)}
    & \textbf{rL2 Error} & \textbf{Time (s)} \\
    \hline
    RTNN 
       & \textbf{5.70e-03} & 1.21e+03 
       & \textbf{1.44e-02} & 1.10e+03 
       & \textbf{4.28e-04} & 2.97e+02 \\
    NCL  
       & 2.54e-02 & 2.46e+03 
       & 1.53e-01 & 2.39e+03
       & 1.73e-03 & 1.00e+03 \\
    PINN 
       & 2.99e-02 & \textbf{3.12e+02} 
       & 2.48e-01 & \textbf{1.06e+03}
       & 1.41e-03 & \textbf{1.82e+02} \\
    \hline
    \end{tabular}
    }    \caption{Comparison of RTNN, NCL, and PINN across three Incompressible Navier-Stokes experiments (Cylinder, Airfoil, Beltrami). We report median relative  \(L^2\) (rL2) error and median wall times accross 5 different seeds.}
\end{table}

\begin{figure*}[t]
\centering
\includegraphics[width=\textwidth]{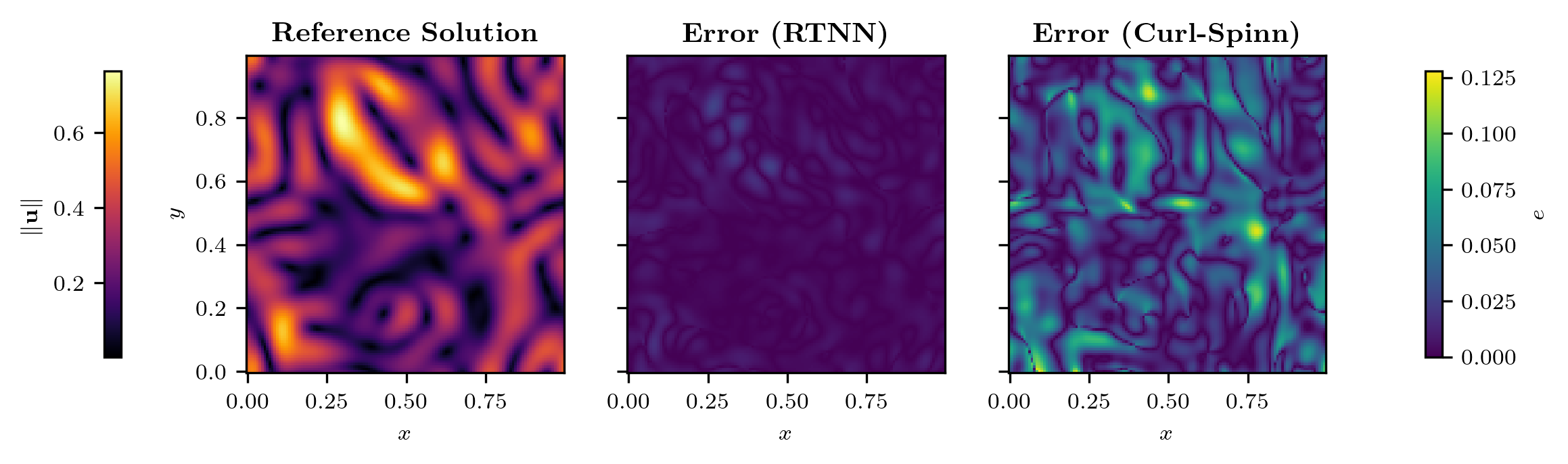}
\caption{Qualitative comparison of predicted velocity fields for the 2D periodic incompressible MHD experiment (\(Re=1000, ReM=1000\)) at time \(t=0.55\). Left: Reference solution showing the velocity magnitude \(\||\mathbf{u}\||\). Middle: Absolute error of the RTNN prediction, calculated as \(e = \| \mathbf{u}_{\text{Ref}} - \mathbf{u}_{\text{RTNN}} \|_2\). Right: Absolute error of the SPINN baseline prediction, calculated as \(e = \| \mathbf{u}_{\text{Ref}} - \mathbf{u}_{\text{SPINN}} \|_2\). This visualization highlights RTNN's enhanced accuracy in capturing the complex magneto-fluid dynamic structures compared to the SPINN baseline at the specified time.}
\label{fig:mhd_qualitative_field_comparison} % Ensure this label is unique
\end{figure*}

\subsection{Magnetohydrodynamics (MHD)}
\label{subsec:mhd}

We next consider the incompressible resistive magnetohydrodynamics (MHD) equations, which couple fluid velocity and pressure to a magnetic field. The governing PDEs on a domain $\Omega \times [0,T]$ are:
\begin{align}
\label{eq:mhd_momentum}
\partial_t \mathbf{u} + (\mathbf{u}\cdot\tilde{\nabla} )\,\mathbf{u}
&=
-\tilde{\nabla} \!\Bigl(p + \tfrac{|\mathbf{B}|^2}{2}\Bigr)
+ (\mathbf{B}\cdot\tilde{\nabla} )\mathbf{B}
+ \nu\,\tilde{\nabla} ^2 \mathbf{u},
\\[4pt]
\label{eq:mhd_induction}
\partial_t \mathbf{B} + (\mathbf{u}\cdot\tilde{\nabla} )\,\mathbf{B}
&=
(\mathbf{B}\cdot\tilde{\nabla} )\,\mathbf{u}
+ \eta\,\tilde{\nabla} ^2 \mathbf{B},
\\[4pt]
\tilde{\nabla} \cdot \mathbf{u} &= 0, \qquad \tilde{\nabla} \cdot \mathbf{B} = 0,
\end{align}
where $\mathbf{u} = (u,v)$ is the velocity field (2D case), $p$ is the pressure, $\mathbf{B} = (B_x,B_y)$ is the magnetic‐field vector, $\nu$ is the kinematic viscosity, $\eta$ is the magnetic diffusivity, and $|\mathbf{B}|^2 = B_x^2 + B_y^2$.

\paragraph{Vector‐Potential Formulation of the Magnetic Field.}
To enforce $\tilde{\nabla}\cdot \mathbf{B} = 0$ exactly, we parametrize $\mathbf{B}$ via a \emph{vector potential} $\mathbf{A}$:
\[
\mathbf{B} 
\;=\; 
\tilde{\nabla}  \times \mathbf{A}.
\]
In 2D, we may simply take $\mathbf{A} = (0,0,\,\psi(x,y))$, so that 
$\mathbf{B} = \bigl(\partial_y \psi,\;-\partial_x \psi\bigr)$ 
automatically satisfies $\tilde{\nabla}\cdot\mathbf{B}=0$.  
The induction equation~\eqref{eq:mhd_induction} then becomes an evolution for $\psi$:
\[
\partial_t \psi \;+\; (\mathbf{u}\cdot\tilde{\nabla} a)\,\psi
\;=\;
\eta \,\tilde{\nabla} ^2 \psi.
\]

\paragraph{Divergence‐Free Symmetric Tensor (DFST) for Momentum.}
Similar to the Navier–Stokes case, we unify incompressibility and momentum conservation in the DSFT Form:
\begin{equation}
\label{eq:mhd_stress_tensor}
\mathcal{S}
=
\begin{pmatrix}
1 & \mathbf{u}^\top \\[2pt]
\mathbf{u} & \mathcal{T}
\end{pmatrix},
\end{equation}
where $\mathbf{u}\in\mathbb{R}^2$ is:
\[
  \mathcal{T}
  \;=\;
  p\,\mathbf{I}
  \;+\;
  \mathbf{u}\otimes \mathbf{u}
  \;-\;
  \nu\,\bigl(\tilde{\nabla} \mathbf{u} + (\tilde{\nabla} \mathbf{u})^\top\bigr)
  \;+\;
  \mathcal{M}^{\mathrm{dev}},
\]
with  $\mathcal{M}^{\mathrm{dev}}$ the the Maxwell magnetic stress.
\[
  \mathcal{M}^{\mathrm{dev}}
  \;=\;
  \tfrac12\,|\mathbf{B}|^2\,\mathbf{I}
  \;-\;
  \bigl(\mathbf{B}\otimes \mathbf{B}\bigr),
\]
for magnetic field $\mathbf{B}$.  
\paragraph{RTNN Parametrization for 2D Incompressible MHD}

Let \(\mathcal{S}_\theta\) be a \((2+1)\times(2+1)\) RTNN. From its block structure, we extract the physical fields as:
\[
  (u_\theta, v_\theta) 
  \;=\;
  (\mathcal{S}_\theta)_{1:2,\,0}
\]
We parametrize the magnetic field \(\mathbf{B}_\theta\) via a separate network outputting a scalar potential \(\psi_\theta\):
\[
  \mathbf{B}_\theta 
  \;=\;
 \tilde{\nabla} \times (0,0,\psi_\theta)
  \;=\;
  \bigl(\partial_y\psi_\theta,\;-\,\partial_x\psi_\theta\bigr),
  \quad
  \tilde{\nabla} \!\cdot\!\mathbf{B}_\theta = 0.
\]
Define the Maxwell stress (including its isotropic part):
\[
  \mathcal{M}_\theta
  \;=\;
  \tfrac12\,|\mathbf{B}_\theta|^2 \,\mathbf{I}
  \;-\;
  \bigl(\mathbf{B}_\theta \otimes \mathbf{B}_\theta\bigr).
\]
We then let 
\[
  \sigma_\theta
  \;=\;
  (\mathcal{S}_\theta)_{1:2,\,1:2}
  \;-\;
  \bigl(u_\theta,v_\theta\bigr)\otimes\bigl(u_\theta,v_\theta\bigr)
  \;-\;
  \mathcal{M}_\theta,
\]
so \(\sigma_\theta\) is the purely \emph{fluid} portion of the stress once advection and magnetic terms have been subtracted.  We then proceed in the same manner as section \ref{subsec:Incompressible_Navier_Stokes}.
\[
  \sigma_\theta \;=\; p_\theta \,\mathbf{I} \;+\; \sigma_\theta^{\mathrm{dev}},
  \quad
  p_\theta 
  \;=\; 
  \tfrac12 \,\mathrm{tr}\bigl(\sigma_\theta\bigr),
\]
and define the viscous stress term
\[
  \mathcal{D}_\theta
  \;=\;
  \nu\,\bigl(\tilde{\nabla} \mathbf{u}_\theta + (\tilde{\nabla} \mathbf{u}_\theta)^\top\bigr).
\]

\paragraph{Training Objective.}

Enforcing momentum balance then requires 
\(\sigma_\theta^{\mathrm{dev}} \approx \mathcal{D}_\theta^{\mathrm{dev}}\).  
We form a residual
\[
  \mathcal{R}_{\sigma,\mathrm{visc}} 
  \;=\;
  \sigma_\theta^{\mathrm{dev}}
  \;-\; 
  \mathcal{D}_\theta,
\]
penalized in the loss.  In addition, we include the induction equation residual
\[
  \mathcal{R}_{\mathrm{induction}}
  \;=\;
  \partial_t \mathbf{B}_\theta
  \;+\;
  (\mathbf{u}_\theta\!\cdot\tilde{\nabla} )\,\mathbf{B}_\theta
  \;-\;
  (\mathbf{B}_\theta\!\cdot\tilde{\nabla} )\,\mathbf{u}_\theta
  \;-\;
  \eta\,\tilde{\nabla} ^2\mathbf{B}_\theta,
\]
Leading to the final training objective:
\[
  \mathcal{L}(\theta)
  \;=\;
  \|\mathcal{R}_{\sigma,\mathrm{visc}}\|_{\Omega_T}^2
  \;+\;
  \|\mathcal{R}_{\mathrm{induction}}\|_{\Omega_T}^2
  \;+\;
  \mathcal{L}_{\mathrm{BC}}
  \;+\;
  \mathcal{L}_{\mathrm{IC}}
  \;+\;
\]
\[
  \mathcal{L}_{\mathrm{data}}.
\]
The first two terms penalize violations of momentum conservation (viscous + magnetic) and induction equations, respectively. \(\mathcal{L}_{\mathrm{BC}}\) and \(\mathcal{L}_{\mathrm{IC}}\) enforce boundary and initial conditions, and \(\mathcal{L}_{\mathrm{data}}\) integrates available labeled observations.  All loss terms are formulated in the least squares sense.

\paragraph{Experimental setup}
We train and validate our method on a three-dimensional \emph{periodic} incompressible MHD flow in \([0, L]^2\) with Reynolds numbers \(\mathrm{Re} = 1000\) and \(\mathrm{ReM} = 1000\). The simulation covers the time interval \(t \in [0, 0.5]\) for training and tests at \(t = 0.55\). Training data is generated using a spectral solver, with initial velocity \(\mathbf{u}_0\) and magnetic field \(\mathbf{B}_0\) sampled from Gaussian random fields. Periodic boundary conditions are strictly enforced in all cases using the approach described in \cite{Dong_2021}. We employ a \emph{Separable Physics-Informed Neural Network} (SPINN) \cite{cho2023separablephysicsinformedneuralnetworks} comprising \textbf{5 hidden layers} and \textbf{500 neurons per layer} to handle the structured 3D grid, discretized into \textbf{101} × \textbf{128} × \textbf{128} points. The model is trained using \textbf{50,000 iterations} of the L-BFGS optimizer. Validation at \(t = 0.55\) utilizes spectral solver data to evaluate performance. We compare three approaches: \textbf{(i)} our RTNN-based method, \textbf{(ii)} a SPINN baseline with penalized residuals, and \textbf{(iii)} a Curl-SPINN parametrizing velocity and magnetic fields as the curl of a scalar potential. Additional details and performance plots are provided in \cref{ap:mhd}.

\paragraph{Results and Discussion}  
\begin{table}[t]
    \centering

    \label{tab:benchmark_results}
    \resizebox{\columnwidth}{!}{%
    \begin{tabular}{|l|c|c|c|}
        \hline
        \textbf{Method} & \textbf{rL\textsubscript{2} Error (Velocity)} & \textbf{rL\textsubscript{2} Error (B)} & \textbf{Wall Time (s)} \\
        \hline
        \textbf{RTNN} & \textbf{2.34e-02} & \textbf{1.04e-01} & 2340.41  \\
        Curl-SPINN & 1.61e-01 & 1.55e-01& 1191.67 \\
        SPINN & 1.82e-01 & 2.93e-01 & \textbf{414.73} \\
        \hline
    \end{tabular}
    
    }
        \caption{Comparison of methods for the MHD experiment, reporting the median relative \( L_2 \) error for velocity and magnetic fields (B in the table), and median wall time across five independent training runs with different random seeds.}
\end{table}

Table \ref{tab:benchmark_results} reports the relative \(L^2\) errors for both velocity and magnetic fields and \cref{fig:mhd_results} shows the error evolution through training accross seeds. A qualitative comparison of the predicted velocity fields is presented in \cref{fig:mhd_qualitative_field_comparison}, which visually underscores the enhanced accuracy of RTNN in capturing the complex magneto-fluid dynamic structures compared to the Curl-SPINN baseline. Our RTNN significantly improves velocity accuracy compared to the baselines and also enhances the accuracy of magnetic field predictions. Additionally, we demonstrated that RTNN can be incorporated into coupled systems for more complex problems.

\section{Discussion and Conclusion}

We introduced \emph{Riemann Tensor Neural Networks} (RTNNs), a new class of neural architectures tailored for encoding divergence-free symmetric tensors (DFSTs). By construction, RTNNs exactly satisfy the DSFT conditions that encodes conservation of mass and momentum. Our theoretical results confirm that RTNNs are universal approximators of DFSTs, and our numerical benchmarks illustrate that they consistently improve accuracy compared to baselines such as standard PINNs and methods enforcing only part of the conservation (e.g., mass alone).

\paragraph{Limitations and Future Work} While our experiments focus on fluid dynamics applications using MLPs, RTNNs have the potential to extend to diverse systems like Euler–Fourier, relativistic Euler, Boltzmann, to name but a few. Future work includes enhancing algorithm performance, extending RTNNs to operator learning frameworks, and reducing computational costs by developing architectures with more efficient differential operators. Additionally, exploring function space optimization techniques \cite{jnini2024GNNG,jnini2025dualnaturalgradientdescent} could further improve the accuracy of RTNNs.
\section*{Acknowledgments}
A.J. acknowledges support from a fellowship provided by Leonardo S.p.A. 
This work was partially funded under the NRRP, Mission 4 Component 2 Investment 1.4, by the European Union – NextGenerationEU (proj. nr. CN 00000013).

\section*{Impact Statement}
This paper presents work whose goal is to advance the field of Machine Learning. There are many potential societal consequences of our work, none which we feel must be specifically highlighted here.

% In the unusual situation where you want a paper to appear in the
% references without citing it in the main text, use \nocite

\bibliography{example_paper}
\bibliographystyle{icml2025}

%%%%%%%%%%%%%%%%%%%%%%%%%%%%%%%%%%%%%%%%%%%%%%%%%%%%%%%%%%%%%%%%%%%%%%%%%%%%%%%
%%%%%%%%%%%%%%%%%%%%%%%%%%%%%%%%%%%%%%%%%%%%%%%%%%%%%%%%%%%%%%%%%%%%%%%%%%%%%%%
% APPENDIX
%%%%%%%%%%%%%%%%%%%%%%%%%%%%%%%%%%%%%%%%%%%%%%%%%%%%%%%%%%%%%%%%%%%%%%%%%%%%%%%
%%%%%%%%%%%%%%%%%%%%%%%%%%%%%%%%%%%%%%%%%%%%%%%%%%%%%%%%%%%%%%%%%%%%%%%%%%%%%%%
\newpage
\appendix
\onecolumn

\section{Proofs}
\label{proofs}

\subsection{Proof of Theorem ~\ref{thm:DFSTsRepresentation}}
In this section, we prove the main result on the representation of divergence-free symmetric tensors (Theorem~\ref{thm:DFST}), which states that \(\{\,S_{ab}\,\}\) (symmetric and divergence-free) exactly coincides with the image of the map \(K_{acbd}\mapsto \nabla^c\nabla^d K_{acbd}\), where \(K_{acbd}\) satisfies Riemann-like symmetries.  

We first recall the classification of Riemann-like \((0,4)\)-tensors (Lemma~\ref{lemma:classification_Riemann_like}), and then prove the surjectivity of the map \(\Phi\colon K_{acbd}\mapsto \nabla^c\nabla^d K_{acbd}\) (Lemma~\ref{thm:PhiSurjectivity}).  Combining these two ingredients completes the proof of Theorem~\ref{thm:DFST}.

\subsubsection{Main Theorem: Representation of Divergence-Free Symmetric Tensors}
\label{sec:thm_dfst}

\begin{theorem}[Representation of Divergence-Free Symmetric Tensors on a Flat Manifold]
\label{thm:DFST}
Let \(V\) be an \(n\)-dimensional real vector space with a fixed basis \(\{e_a\}_{a=1}^n\), and let \(\{e_a^*\}_{a=1}^n\) denote the corresponding dual basis of \(V^*\).  Let \(\Lambda^2 V^*\) denote the space of 2-forms on \(V\).  Consider the space of all \((0,4)\)-tensors \(K_{abcd}\) defined on a flat manifold equipped with a Levi-Civita connection \(\nabla\), satisfying the following symmetries:
\begin{enumerate}
    \item \textbf{Antisymmetry within index pairs:}
    \[
        K_{(ab)cd} \;=\; 0,
        \quad
        K_{ab(cd)} \;=\; 0,
    \]
    \item \textbf{Symmetry between pairs:}
    \[
        K_{abcd} \;=\; K_{cdab}.
    \]
\end{enumerate}
Let \(\{\omega_1, \dots, \omega_m\}\) be a fixed basis of \(\Lambda^2 V^*\), where \(m = \tfrac{n(n-1)}{2}\).  Define the tensors:
\[
T^{(i,j)}_{abcd}
\;:=\;
\omega_i(e_a^* \wedge e_b^*) \,\omega_j(e_c^* \wedge e_d^*)
\;+\;
\omega_j(e_a^* \wedge e_b^*) \,\omega_i(e_c^* \wedge e_d^*).
\]
Then:
\begin{enumerate}
\item The space of \emph{divergence-free}, symmetric \((0,2)\)-tensors \(S_{ab}\) on the flat manifold is exactly the image of the map
\[
K_{acbd}
\;\longmapsto\;
S_{ab}
\;=\;
\nabla^c\nabla^d \,K_{acbd}.
\]
\item Moreover, any such \(S_{ab}\) can be expressed as
\[
S_{ab}
\;=\;
\sum_{1 \leq i \leq j \leq m}
T^{(i,j)}_{acbd}\;\nabla^c\nabla^d \,c_{ij},
\]
where \(c_{ij}\) are smooth scalar functions.
\end{enumerate}
\end{theorem}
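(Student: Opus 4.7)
The plan is to break the theorem into three pieces: (i) verify that the image of the map $K_{acbd}\mapsto \nabla^c\nabla^d K_{acbd}$ lies inside the DFSTs, (ii) show the reverse inclusion, and (iii) combine (ii) with an explicit basis of the space of Riemann-like tensors to obtain the closed-form decomposition. For (i), I would proceed by direct index manipulation. Symmetry of $S_{ab} = \nabla^c\nabla^d K_{acbd}$ follows by writing $S_{ba} = \nabla^c\nabla^d K_{bcad}$, applying the pair symmetry $K_{bcad} = K_{adbc}$, relabeling the dummy indices $c\leftrightarrow d$, and using that covariant derivatives commute on the flat manifold. Divergence-freeness reduces to showing $\nabla^a\nabla^c K_{acbd} = 0$: swapping the commuting derivatives $\nabla^a,\nabla^c$, relabeling, and invoking the antisymmetry $K_{cabd} = -K_{acbd}$ shows the expression equals its own negative.

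The main obstacle is (ii), the surjectivity onto DFSTs. Given a symmetric divergence-free $S_{ab}$ on a simply connected flat domain, I would construct a Riemann-symmetric potential $K$ by a two-step Poincaré-lemma iteration. Since $\partial^a S_{ab} = 0$, for each fixed $b$ the Poincaré lemma yields $S_{ab} = \partial^c H^{(1)}_{acb}$ with $H^{(1)}_{acb} = -H^{(1)}_{cab}$. The symmetry $S_{ab} = S_{ba}$ then gives $\partial^c(H^{(1)}_{acb} - H^{(1)}_{bca}) = 0$, and a second application of Poincaré produces a $(0,4)$-potential whose iterated divergence recovers $S_{ab}$. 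The delicate part — and the step I expect to require the most care — is enforcing all three Riemann-like symmetries on the resulting tensor \emph{simultaneously}. A naive chain of Poincaré lemmas produces only partial symmetries, so one must apply a final algebraic projection onto $\text{Sym}^2(\Lambda^2 V^*)$ and verify that this projection preserves the identity $\nabla^c\nabla^d K_{acbd} = S_{ab}$. An equivalent route is Fourier: check algebraically at each $\xi\neq 0$ that the contraction $K_{acbd}\mapsto \xi^c\xi^d K_{acbd}$ from Riemann-symmetric tensors surjects onto the symmetric tensors annihilated by $\xi^a$. This second route connects directly to Serre's classification of DFSTs, which one could alternatively cite.

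Finally, for (iii), the explicit basis expansion reduces to standard multilinear algebra once (ii) is in hand. The space of $(0,4)$-tensors obeying the three listed symmetries (antisymmetry in each pair and exchange symmetry between the pairs, without the first Bianchi identity) is canonically $\text{Sym}^2(\Lambda^2 V^*)$, whose dimension is $\tfrac{m(m+1)}{2}$ with $m = \tfrac{n(n-1)}{2}$. A basis is given by the symmetric products of basis $2$-forms $\omega_i,\omega_j$, whose component expression is exactly the $T^{(i,j)}_{abcd}$ of the statement. I would then expand any admissible potential as $K_{acbd}(x) = \sum_{1\le i\le j\le m} c_{ij}(x)\, T^{(i,j)}_{acbd}$, pull the non-trainable basis tensors outside the covariant derivatives, and conclude $S_{ab} = \sum_{1\le i\le j\le m} T^{(i,j)}_{acbd}\,\nabla^c\nabla^d c_{ij}$, completing the proof.
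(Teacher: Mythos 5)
Your steps (i) and (iii) coincide with the paper's argument: symmetry and divergence-freeness of $\nabla^c\nabla^d K_{acbd}$ follow from exactly the index manipulations you describe, and the closed-form decomposition is obtained, as you say, by identifying the Riemann-like tensors with $\mathrm{Sym}^2(\Lambda^2 V^*)$, expanding $K$ in the basis $T^{(i,j)}$ with scalar coefficients $c_{ij}(x)$, and pulling the constant basis tensors through the derivatives. The real divergence is in step (ii), and this is where your proposal has a genuine gap. You correctly identify the hard point --- producing a potential that satisfies \emph{all three} Riemann-like symmetries simultaneously --- but you do not resolve it: the iterated Poincar\'e lemma is only asserted to yield ``a $(0,4)$-potential,'' and the claim that a final algebraic projection onto $\mathrm{Sym}^2(\Lambda^2 V^*)$ preserves the identity $\nabla^c\nabla^d K_{acbd}=S_{ab}$ is stated but not verified. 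Symmetrizing under pair exchange does preserve the identity when $S$ is symmetric, but only if the antisymmetries within each index pair are already in place, and obtaining those from a naive chain of Poincar\'e lemmas is precisely the part that requires an argument. As written, the surjectivity step is a plan, not a proof.

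The paper avoids this difficulty entirely with a more explicit construction: it solves the Poisson problem $\Delta L_{ab}=S_{ab}$ via the Dirichlet Green's function, checks that $L_{ab}$ inherits symmetry and divergence-freeness from $S_{ab}$ (integration by parts kills the boundary term), and then writes down the potential in closed form,
\begin{align}
K_{acbd} \;=\; \delta_{ab}L_{dc}-\delta_{ad}L_{bc}-\delta_{cb}L_{da}+\delta_{cd}L_{ba},
\end{align}
a Kulkarni--Nomizu-type product of $\delta$ with $L$. All three symmetries hold by inspection, and $\nabla^c\nabla^d K_{acbd}=\Delta L_{ab}=S_{ab}$ because the first three terms vanish by $\nabla^a L_{ab}=0$. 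If you want to keep your Poincar\'e/Fourier route (which is the classical Beltrami stress-function approach and is viable), you must either carry out the symmetrization argument in full or, more simply, adopt an explicit ansatz of the above form; the symbol-level version of the paper's trick is that $K_{acbd}(\xi)=\delta_{ab}\hat L_{dc}-\dots$ with $\hat L=-\hat S/|\xi|^2$ exhibits the required surjection at each $\xi\neq 0$.
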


\paragraph{Proof Outline.}
By Lemma~\ref{lemma:classification_Riemann_like}, every \(K_{acbd}\) with the above symmetries can be expanded in the basis \(\{T^{(i,j)}\}\).  Hence \(\nabla^c \nabla^d K_{acbd}\) can be written in the claimed form.  We also check that \(\nabla^c\nabla^d K_{acbd}\) is symmetric and divergence-free using flatness (commuting covariant derivatives) plus the antisymmetries.  Finally, Lemma~\ref{thm:PhiSurjectivity} shows that \emph{any} symmetric, divergence-free \(S_{ab}\) arises from some \(K_{acbd}\), establishing that the image of the map is the space of such \(S_{ab}\).

\subsubsection{Classification of Riemann-like Tensors}
\label{sec:lemma_classification}

\begin{lemma}[Classification of Riemann-like \((0,4)\)-Tensors]
\label{lemma:classification_Riemann_like}
Let \(V\) be an \(n\)-dimensional real vector space, and let \(\Lambda^2 V^*\) be the space of 2-forms on \(V\).  Denote 
\[
m \;=\; \dim(\Lambda^2 V^*) \;=\; \frac{n(n-1)}{2}.
\]
Consider the vector space
\[
\bigl\{
  T_{abcd} 
  \;\mid\; 
  T_{abcd}=-T_{bacd},\quad
  T_{abcd}=-T_{abdc},\quad
  T_{abcd}=T_{cdab}
\bigr\}.
\]
There is a canonical vector-space isomorphism between this space and \(\mathrm{Sym}^2(\Lambda^2 V^*)\).  In particular, its dimension is
\[
\frac{m(m+1)}{2}
\;=\;
\frac{\tfrac{n(n-1)}{2}\,\Bigl(\tfrac{n(n-1)}{2}+1\Bigr)}{2}.
\]
Moreover, if \(\{\omega_1,\dots,\omega_m\}\) is a basis for \(\Lambda^2 V^*\), then a corresponding basis in the space of such \((0,4)\)-tensors is given by
\[
T^{(i,j)}_{abcd}
\;=\;
\Bigl(\omega_i\otimes\omega_j + \omega_j\otimes\omega_i\Bigr)
\Bigl(e_a^*\wedge e_b^*,\,e_c^*\wedge e_d^*\Bigr),
\quad
1\le i\le j\le m.
\]
\end{lemma}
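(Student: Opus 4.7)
The plan is to produce the canonical isomorphism
\[
\Psi \;:\; \bigl\{\,T_{abcd}\text{ Riemann-like}\bigr\}\;\xrightarrow{\;\sim\;}\;\mathrm{Sym}^2(\Lambda^2 V^*),
\]
from which the dimension count and the basis description follow at once. To define $\Psi$, I view a $(0,4)$-tensor $T$ as a quadrilinear form $V\times V\times V\times V\to\mathbb{R}$ and observe that the hypothesis $T_{abcd}=-T_{bacd}$ means $T$ is alternating in its first two slots, and $T_{abcd}=-T_{abdc}$ means it is alternating in its last two slots. Consequently $T$ factors through the natural projection $V^{\otimes 4}\to \Lambda^2 V\otimes \Lambda^2 V$, giving a bilinear form $\widehat T\in (\Lambda^2 V)^*\otimes (\Lambda^2 V)^*\cong \Lambda^2 V^*\otimes \Lambda^2 V^*$. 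The pair symmetry $T_{abcd}=T_{cdab}$ then asserts exactly that $\widehat T$ is symmetric under swap of its two $\Lambda^2 V$ arguments, so $\widehat T\in \mathrm{Sym}^2(\Lambda^2 V^*)$. We set $\Psi(T)=\widehat T$.

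Next I would build the inverse explicitly: given any $S\in \mathrm{Sym}^2(\Lambda^2 V^*)$, define
\[
T(v_1,v_2,v_3,v_4)\;=\;S\bigl(v_1\wedge v_2,\,v_3\wedge v_4\bigr).
\]
The wedge product is bilinear and alternating, so $T$ automatically satisfies the two antisymmetries, and the symmetry of $S$ in its two slots yields $T_{abcd}=T_{cdab}$. The assignment $S\mapsto T$ is linear and a two-sided inverse of $\Psi$. Hence $\Psi$ is a linear isomorphism, which is canonical in that it uses only the universal alternation and symmetrization maps. The dimension count is then immediate: $\dim\Lambda^2 V^*=m=n(n-1)/2$, so
\[
\dim\mathrm{Sym}^2(\Lambda^2 V^*)\;=\;\frac{m(m+1)}{2}.
\]

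For the basis statement, recall that if $\{\omega_1,\dots,\omega_m\}$ is a basis of $\Lambda^2 V^*$, then the symmetrized tensors $\omega_i\otimes\omega_j+\omega_j\otimes\omega_i$ with $1\le i\le j\le m$ form a basis of $\mathrm{Sym}^2(\Lambda^2 V^*)$ (this is the standard basis of the symmetric square of a vector space). Applying $\Psi^{-1}$ to $\omega_i\otimes\omega_j+\omega_j\otimes\omega_i$ produces exactly the tensor $T^{(i,j)}_{abcd}$ written in the lemma, since evaluation on $(e_a\wedge e_b,\,e_c\wedge e_d)$ is precisely the pairing $\omega_i(e_a^*\wedge e_b^*)\,\omega_j(e_c^*\wedge e_d^*)$ (up to the chosen wedge convention). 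Since $\Psi^{-1}$ is an isomorphism, the images form a basis of the Riemann-like tensor space.

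The main obstacle is bookkeeping with conventions, specifically the factor of $2$ that appears depending on whether $\omega_i(e_a^*\wedge e_b^*)$ is interpreted as the antisymmetric bilinear evaluation or as coordinate components, and the analogous convention for the symmetric product $\odot$ versus the symmetrized tensor product $\otimes$. Once a single normalization is fixed throughout (and used consistently in both $\Psi$ and $\Psi^{-1}$), all verifications reduce to writing out definitions, and no genuine computation is required beyond checking that the two natural projections $V^{\otimes 4}\twoheadrightarrow \Lambda^2 V\otimes\Lambda^2 V \twoheadrightarrow \mathrm{Sym}^2(\Lambda^2 V)$ are precisely imposed by the three stated symmetries.
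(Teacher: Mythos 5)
Your proposal is correct and follows essentially the same route as the paper: both construct the isomorphism by passing a quadrilinear form with the stated antisymmetries to a symmetric bilinear form on $\Lambda^2 V^*$, exhibit the explicit inverse via evaluation on wedge products, and then read off the dimension and the basis $\{\omega_i\otimes\omega_j+\omega_j\otimes\omega_i\}$. The only cosmetic difference is that you phrase the forward map invariantly as factoring through $V^{\otimes 4}\twoheadrightarrow\Lambda^2 V\otimes\Lambda^2 V\twoheadrightarrow\mathrm{Sym}^2(\Lambda^2 V)$, whereas the paper writes the same map in index form $\widetilde T(\alpha,\beta)=T_{abcd}\alpha^{ab}\beta^{cd}$; this is a presentational choice, not a different argument.
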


\begin{proof}[Proof of Lemma~\ref{lemma:classification_Riemann_like}]
\textbf{Step 1: From \(T\) to a symmetric bilinear form.}  
Given \(T_{abcd}\) with the stated symmetries, define for \(\alpha,\beta\in\Lambda^2 V^*\):
\[
\alpha = \alpha^{ab}\, e_a^*\wedge e_b^*,
\quad
\beta  = \beta^{cd}\, e_c^*\wedge e_d^*,
\]
\[
\widetilde{T}(\alpha,\beta)
\;=\;
T_{abcd}\,\alpha^{ab}\,\beta^{cd}.
\]
Since \(\alpha^{ab}\) and \(\beta^{cd}\) are antisymmetric in their respective index pairs, and \(T_{abcd}\) is antisymmetric in \((a,b)\) and \((c,d)\), this is well-defined.  The symmetry \(T_{abcd}=T_{cdab}\) implies that \(\widetilde{T}\) is symmetric as a bilinear form on \(\Lambda^2 V^*\).  Hence \(\widetilde{T}\in \mathrm{Sym}^2(\Lambda^2 V^*)\).

\textbf{Step 2: From a symmetric bilinear form back to \(T\).}  
Conversely, given \(\widetilde{T}\in \mathrm{Sym}^2(\Lambda^2 V^*)\), define
\[
T_{abcd}
\;=\;
\widetilde{T}\bigl(e_a^*\wedge e_b^*,\,e_c^*\wedge e_d^*\bigr).
\]
A straightforward check shows that \(T_{abcd}\) inherits antisymmetry in \((a,b)\), antisymmetry in \((c,d)\), and the pair-exchange symmetry \((a,b)\leftrightarrow(c,d)\).  

\textbf{Step 3: Isomorphism and basis.}  
These two constructions are linear inverses of each other, yielding a vector-space isomorphism between our \((0,4)\)-tensors and \(\mathrm{Sym}^2(\Lambda^2 V^*)\).  The dimension follows from standard linear algebra.  If \(\{\omega_1,\dots,\omega_m\}\) is a basis of \(\Lambda^2 V^*\), then
\(\{\omega_i\otimes\omega_j + \omega_j\otimes\omega_i : 1\le i\le j\le m\}\)
forms a basis for \(\mathrm{Sym}^2(\Lambda^2 V^*)\).  Mapping these to \((0,4)\)-tensors via the above correspondence yields the stated basis \(\{T^{(i,j)}\}\).
\end{proof}

\subsubsection{Surjectivity of the Map \(\Phi\colon K\mapsto \nabla^c\nabla^d K\)}
\label{sec:lemma_surjectivity}

As discussed, the second key ingredient for Theorem~\ref{thm:DFST} is showing that \emph{every} divergence-free, symmetric \((0,2)\)-tensor \(S_{ab}\) can be obtained from some Riemann-like \((0,4)\)-tensor \(K_{acbd}\).  Equivalently, the map 
\[
\Phi:\;K_{acbd} \;\mapsto\; S_{ab}=\nabla^c\nabla^d K_{acbd}
\]
is surjective.

\begin{lemma}[Surjectivity of \(\Phi\)]
\label{thm:PhiSurjectivity}
Let \(K_{acbd}\) be any Riemann-like \((0,4)\)-tensor (i.e.\ satisfying the symmetries of Lemma~\ref{lemma:classification_Riemann_like}).  Define 
\(
S_{ab}
=
\nabla^c\nabla^d K_{acbd}.
\)
Then \(S_{ab}\) is automatically symmetric and divergence-free.  Moreover, \(\Phi\) is \emph{onto}: for any given symmetric, divergence-free \((0,2)\)-tensor \(S_{ab}\), there exists a Riemann-like \(K_{acbd}\) such that \(S_{ab}=\nabla^c\nabla^d K_{acbd}\).
\end{lemma}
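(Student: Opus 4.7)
The plan is to split the lemma into its two claims. The first---that $\Phi(K) := \nabla^c\nabla^d K_{acbd}$ lands in the space of symmetric divergence-free tensors---is a direct index calculation using flatness. The second---surjectivity---is the substantive content, which I would handle by an explicit Fourier-space construction of the potential together with a separate treatment of the constant mode.

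For symmetry and divergence-freeness, the key observation is that on a flat manifold $\nabla^a$ and $\nabla^c$ commute, so $\nabla^a\nabla^c$ is symmetric in $(a,c)$. Contracting it against the Riemann-symmetric $K_{acbd}$, which is antisymmetric in the same pair, gives $\nabla^a S_{ab} = \nabla^a\nabla^c\nabla^d K_{acbd} = 0$. Symmetry $S_{ab}=S_{ba}$ follows from the pair-exchange identity $K_{acbd} = K_{bdac}$, relabeling the dummy indices $(c,d)$, and again using commutativity of covariant derivatives.

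For surjectivity, I would work on $\mathbb{R}^n$ (or a flat torus) and pass to Fourier variables: the target equation becomes $\hat S_{ab}(k) = -k^c k^d\,\hat K_{acbd}(k)$, while the hypothesis on $S$ reads $k^a \hat S_{ab}(k) = 0$ and $\hat S_{ab}(k) = \hat S_{ba}(k)$. I would then exhibit the explicit ansatz
\begin{align*}
\hat K_{acbd}(k) \;=\; -\frac{1}{|k|^4}\bigl[\,k_c k_d\, \hat S_{ab}(k) - k_a k_d\, \hat S_{cb}(k) - k_c k_b\, \hat S_{ad}(k) + k_a k_b\, \hat S_{cd}(k)\,\bigr],
\end{align*}
which is (up to a factor of four) the antisymmetrization of $k_c k_d \hat S_{ab}(k)$ over the pairs $(a,c)$ and $(b,d)$. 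Antisymmetry in each pair is then visible by inspection, the pair-exchange $\hat K_{acbd} = \hat K_{bdac}$ follows from $\hat S_{ab} = \hat S_{ba}$, and contracting against $-k^c k^d$ reduces the four terms to $\hat S_{ab}$ alone, since the three ``cross'' terms each contain a factor $k^c \hat S_{cb}$ or $k^d \hat S_{ad}$ that vanishes by the divergence-free hypothesis. Inverse Fourier transforming produces the required $K$.

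The main obstacle is the zero-frequency mode, where $|k|^{-4}$ is singular. This mode corresponds to a constant symmetric tensor $\bar S_{ab}$ which is annihilated by $\nabla^c\nabla^d$, so it must be handled by hand. I would construct an explicit quadratic-polynomial potential, for instance the antisymmetrization of $\bar S_{ab}\, x_c x_d$ over the pairs $(a,c)$ and $(b,d)$ with an appropriate rescaling; a short direct calculation shows that it has the Riemann symmetries and that $\nabla^c\nabla^d$ applied to it reproduces $\bar S_{ab}$. Adding this polynomial correction to the inverse Fourier transform of $\hat K$ completes the construction. As an alternative that avoids Fourier analysis altogether, one could invoke the well-known exactness of the linearized Calabi/BGG complex on simply connected flat domains, which delivers the same conclusion directly.
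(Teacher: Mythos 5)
The first half of your argument (symmetry and divergence-freeness of $\Phi(K)$ via pair-exchange $K_{acbd}=K_{bdac}$, antisymmetry in $(a,c)$, and commuting covariant derivatives on a flat manifold) matches the paper exactly. For surjectivity, however, you take a genuinely different route, and both are correct. The paper first solves a \emph{scalar} Poisson problem: it sets $L_{ab} = \int G(x,x')\,S_{ab}(x')\,dx'$ with $G$ the Dirichlet Green's function, checks that $L$ inherits symmetry and divergence-freeness (the boundary term vanishes because $G|_{\partial\Omega}=0$), and then assembles the potential by the purely algebraic Kulkarni--Nomizu-type formula
\begin{align*}
K_{acbd} \;=\; \delta_{ab}L_{dc}-\delta_{ad}L_{bc}-\delta_{cb}L_{da}+\delta_{cd}L_{ba},
\end{align*}
so that the Riemann symmetries and $\nabla^c\nabla^d K_{acbd}=\Delta L_{ab}=S_{ab}$ drop out of the Kronecker-delta structure together with $\nabla^a L_{ab}=0$. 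Your proposal instead inverts $\Phi$ directly in Fourier space via the degree $-2$ pseudodifferential ansatz $\hat K_{acbd}= -|k|^{-4}\,\bigl[k_ck_d\hat S_{ab}-k_ak_d\hat S_{cb}-k_ck_b\hat S_{ad}+k_ak_b\hat S_{cd}\bigr]$; I checked that the symmetries and the contraction identity $-k^ck^d\hat K_{acbd}=\hat S_{ab}$ hold exactly as you claim (the three cross terms die by $k^a\hat S_{ab}=0$), and your quadratic-polynomial potential for the constant mode works with the normalization $\tfrac{1}{n(n-1)}$. The two $K$'s differ (which is fine---$\Phi$ has a large kernel). The trade-off is what one would expect: the paper's Green's-function route factors the analysis into a standard scalar elliptic solve plus algebra and handles a bounded domain with Dirichlet data out of the box (though note its proof quietly assumes $S|_{\partial\Omega}=0$, a hypothesis absent from the lemma statement), whereas your Fourier construction is cleaner on $\mathbb{R}^n$ or a periodic box but needs the zero-mode patch---and on the torus that patch is a non-periodic quadratic, so $K$ lives in a slightly enlarged function class; on a general bounded domain one would have to rework it. Your closing appeal to exactness of the linearized Calabi/BGG complex on simply connected flat domains is, as you say, the most conceptual route and subsumes both.
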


\begin{proof}
\textbf{Part 1:} We first verify that if \(S_{ab} = \nabla^c\nabla^d K_{acbd}\), then \(S_{ab}\) is symmetric and divergence-free.

- \emph{Symmetry}: Using \(K_{acbd}=K_{bdac}\), we get
\[
S_{ab}
\;=\;
\nabla^c\nabla^d\,K_{acbd}
\;=\;
\nabla^c\nabla^d\,K_{bdac}
\;=\;
S_{ba}.
\]

- \emph{Divergence-free}: In flat space, covariant derivatives commute, so
\[
\nabla^a S_{ab}
=
\nabla^a\nabla^c\nabla^d\,K_{acbd}
=
\nabla^c\nabla^d\nabla^a\,K_{acbd}.
\]
Because \(K_{acbd}\) is antisymmetric in \((a,c)\), one shows that \(\nabla^a K_{acbd}=0\).  Hence 
\(\nabla^a S_{ab}=0.\)
\end{proof}
\textbf{Part 2 (Surjectivity)}:  
We now show that given any symmetric, divergence-free \(S_{ab}\), we can solve \(S_{ab} = \nabla^c\nabla^d K_{acbd}\) for some Riemann-like \(K_{acbd}\).  
% The argument below is spelled out for Dirichlet boundary conditions, but a similar idea works for Neumann or other boundary conditions.

\subsection{Surjectivity of the map }
\begin{theorem}[Surjectivity of the $\Phi$]
    
\label{thm:PhiSurjectivity}
We will now prove that the following map $\Phi$ is surjective:
\begin{align}
    \Phi&: K \mapsto S\\
    S_{ab} &= \Phi(K_{acbd}) = \nabla^c \nabla^d K_{acbd}
\end{align}

where $K_{acbd}$ is a Riemann-like $(0,4)$ tensor and $S_{ab}$ is a symmetric divergence-free $(0,2)$ tensor.
\end{theorem}

\begin{proof}
    
This is equivalent to showing that for any given $S_{ab}$, there exists a $K_{acbd}$ such that $S_{ab} = \Phi(K_{acbd})$.
As in the previous sections, we assume we are working on a flat manifold.

Let's define a Poisson's equation \cite{griffiths2023introductiontoelectrodynamics} for each index of the tensor $S_{ab}$, with appropriate boundary conditions as desired:
\begin{align}
                \Delta L_{ab}  &= S_{ab} \label{Lab_def}
\end{align}

These Poisson's equations can be solved by the well-known method of Green's function \cite{griffiths2023introductiontoelectrodynamics}.

By solving the previous equations, we obtain a tensor $L_{ab}$ with the following properties:
\begin{subequations}
\begin{align}
                     L_{ab} &= L_{ba} \label{Lab_a}\\
                    \nabla^a L_{ab} &= 0 \label{Lab_b}\\
                    S_{ab} &= \Delta L_{ab} \label{Lab_c}
\end{align}
\end{subequations}

It's easy to show that:
\begin{itemize}
    \item Eq. \ref{Lab_a} follows from the fact that the Laplacian operator $\Delta$ in Eq. \ref{Lab_def} is meant componentwise and $S_{ab}$ is symmetric.
    \item Eq. \ref{Lab_b} follows from the fact that the derivatives commute on a flat manifold.
    \item Eq. \ref{Lab_c} follows form definition \ref{Lab_def}.
\end{itemize}

We can now easily define a tensor $K_{acbd} $ with the desired properties from $L_{ab}$. Just let:
    \begin{align}
            K_{acbd} &= 2 \left( \delta_{a[b} L_{d]c} - \delta_{c[b} L_{d]a} \right)\\
                     &= \delta_{ab} L_{dc} - \delta_{ad} L_{bc} - \delta_{cb} L_{da} + \delta_{cd} L_{ba}\notag
    \end{align}
We now verify the Riemann-like properties of $K$:
\begin{align}
    K_{(ac)bd} &= 4 \left( \delta_{a[b} L_{d]c} - \delta_{c[b} L_{d]a} + \delta_{c[b} L_{d]a} - \delta_{a[b} L_{d]c}\right) = 0\\
    K_{ac(bd)} &= 4 \left( \delta_{a[b} L_{d]c} - \delta_{c[b} L_{d]a} + \delta_{a[d} L_{b]c} - \delta_{c[d} L_{b]a} \right) = 0\\
    K_{acbd} &= 2 \left( \delta_{a[b} L_{d]c} - \delta_{c[b} L_{d]a} \right)\\
             &= \delta_{ab} L_{dc} - \delta_{ad} L_{bc} - \delta_{cb} L_{da} + \delta_{cd} L_{ba} \notag\\
             &= \delta_{ba} L_{cd} - \delta_{da} L_{cb} - \delta_{bc} L_{ad} + \delta_{dc} L_{ab} \notag\\
              &= \delta_{ba} L_{cd} - \delta_{bc} L_{ad} - \delta_{da} L_{cb} + \delta_{dc} L_{ab} \notag\\
            &= 2 \left( \delta_{b[a} L_{c]d} - \delta_{d[a} L_{d]c} \right) \notag\\
            &= K_{bdac}\notag\\
    \nabla^c \nabla^d K_{acbd} &= 2\, \nabla^c \nabla^d \left( \delta_{a[b} L_{d]c} - \delta_{c[b} L_{d]a} \right)\\
                                &= \nabla^c \nabla^d  \left(\delta_{ab} L_{dc}\right) - \nabla^c \nabla^d \left(\delta_{ad} L_{bc}\right) - \nabla^c \nabla^d \left(\delta_{cb} L_{da}\right) + \nabla^c \nabla^d \left(\delta_{cd} L_{ba}\right)\notag \\
                                &= \delta_{ab} \nabla^c \nabla^d   L_{dc} - \delta_{ad} \nabla^c \nabla^d  L_{bc} -  \delta_{cb} \nabla^c \nabla^d  L_{da} + \delta_{cd} \nabla^c \nabla^d  L_{ba}\notag \\
                                &= 0-0-0+ \nabla^c \nabla_c L_{ba}\notag\\
                                &= \Delta L_{ab}\notag\\
                                &= S_{ab}\notag
\end{align}

So, since we can build an inverse for every $S_{ab}$, the map $\Phi: K_{acbd} \mapsto S_{ab}$ is surjective.

\end{proof}

\subsubsection{Combining the Lemmas to Prove Theorem~\ref{thm:DFST}}

\begin{proof}[Proof of Theorem~\ref{thm:DFST}]
\textbf{(1) Symmetry and divergence-free for \(\nabla^c\nabla^d K_{acbd}\).}  
By Lemma~\ref{thm:PhiSurjectivity} (Part 1), the tensor \(\nabla^c\nabla^d K_{acbd}\) is always symmetric and divergence-free if \(K_{acbd}\) is Riemann-like.

\textbf{(2) Surjectivity: Every symmetric, divergence-free \(S_{ab}\) arises from some \(K_{acbd}\).}  
By Lemma~\ref{thm:PhiSurjectivity} (Part 2), the map \(\Phi\colon K\mapsto\nabla^c\nabla^dK\) is onto the space of such \(S_{ab}\).

\textbf{(3) Basis and explicit representation.}  
From Lemma~\ref{lemma:classification_Riemann_like}, any Riemann-like \(K_{acbd}\) can be expanded in the basis \(\{T^{(i,j)}\}\).  Consequently,
\[
\nabla^c\nabla^d K_{acbd}
\;=\;
\sum_{1\le i\le j\le m}
T^{(i,j)}_{acbd}\;\nabla^c\nabla^d c_{ij}
\]
for some scalar functions \(\{c_{ij}\}\).  This shows that \emph{every} \(S_{ab}\) can indeed be written in the claimed form.  

Hence all parts of the statement hold, and the proof is complete.
\end{proof}

%%%%%%%%%%%%%%%%%%%%%%%%%%%%%%%%%%%%%%%%%%%%%%%%%%%%%%%%%%%%%%%%%%%%%%%%%%%%%%%
%%%%%%%%%%%%%%%%%%%%%%%%%%%%%%%%%%%%%%%%%%%%%%%%%%%%%%%%%%%%%%%%%%%%%%%%%%%%%%%

\subsection{Proof of theorem ~\ref{thm:UniversalApproximationRTNN}}\label{proof urtnn}
\begin{theorem}[Universal Approximation for RTNN]
\label{thm:UniversalApproximationRTNN}
Let \( \Omega \subset \mathbb{R}^3 \) be a compact domain, and let \( S \in C(\Omega; \mathbb{R}^{3 \times 3}) \) be any divergence-free symmetric tensor field on \( \Omega \). For any \( \epsilon > 0 \), there exists a Riemann Tensor Neural Network \( S_\theta \) with a fixed narrow width and arbitrary depth, such that
\begin{align}
\sup_{x \in \Omega} \| S(x) - S_\theta(x) \| < \epsilon,
\end{align}
where \( \| \cdot \| \) denotes the Frobenius norm.
\end{theorem}

\begin{proof}
To prove Theorem ~\ref{thm:UniversalApproximationRTNN}, we proceed in two main steps:

\begin{enumerate}
    \item \textbf{Surjectivity of the Map from \( K \) to \( S \):}  
    By Theorem ~\ref{thm:PhiSurjectivity}, the map
    \begin{align}
    \Phi: K \mapsto S,
    \end{align}
    defined by
    \begin{align}
    S_{ab} = \nabla^c \nabla^d K_{acbd},
    \end{align}
    is surjective. This implies that for any divergence-free symmetric tensor field \( S \in C(\Omega; \mathbb{R}^{m \times m}) \), there exists a tensor \( K \) with Riemann-like symmetries such that
    \begin{align}
    S = \Phi(K).
    \end{align}
    
    \item \textbf{Approximation of Tensor \( K \) Using Neural Networks:}  
    Since \( K \) is a linear combination of scalar functions, specifically
    \begin{align}
    K_{acbd} = \sum_{i,j} c_{ij}(x) T^{(i,j)}_{acbd},
    \end{align}
    where \( T^{(i,j)}_{acbd} \) are fixed basis tensors and \( c_{ij}(x) \) are scalar coefficient functions, we can approximate each \( c_{ij}(x) \) using deep narrow neural networks.

    By the Universal Approximation Theorem for deep narrow neural networks \cite{kidger2020universalapproximationdeepnarrow}, for each scalar function \( c_{ij}(x) \) and for any \( \epsilon > 0 \), there exists a neural network \( \text{NN}_{\theta_{ij}} \) such that
    \begin{align}
        \sup_{x \in \Omega} \left| c_{ij}(x) - c_{ij}^{\text{NN}}(x; \theta_{ij}) \right| < \frac{\epsilon}{C},
        \label{eq:scalar_approximation}
    \end{align}
    where \( C \) is a constant dependent on the norms of the basis tensors \( T^{(i,j)} \) and the chosen tensor norm \( \| \cdot \| \).

    Construct the approximated tensor \( K_\theta \) as
    \begin{align}
        K_\theta(x) = \sum_{i,j} c_{ij}^{\text{NN}}(x; \theta_{ij}) T^{(i,j)}_{acbd}.
        \label{eq:K_approximation}
    \end{align}
    
    Applying the map \( \Phi \) to \( K_\theta \), we obtain the approximated tensor \( S_\theta \):
    \begin{align}
        S_\theta(x) = \Phi(K_\theta)(x) = \nabla^c \nabla^d K_\theta(x).
        \label{eq:S_theta_construction}
    \end{align}
    
    \textbf{Error Propagation:}
    
    The approximation error in each \( c_{ij}(x) \) propagates linearly to \( S_\theta(x) \). Specifically, we have
    \begin{align}
        &\| S(x) - S_\theta(x) \| =\\
                                &= \left\| \Phi(K)(x) - \Phi(K_\theta)(x) \right\| \notag\\
                                 &= \left\| \nabla^c \nabla^d (K - K_\theta)(x) \right\| \notag\\
                                 &= \left\| \nabla^c \nabla^d \left( \sum_{i,j} (c_{ij}(x) - c_{ij}^{\text{NN}}(x; \theta_{ij})) T^{(i,j)}_{acbd} \right) \right\| \notag\\
                                 &\leq \sum_{i,j} \| T^{(i,j)} \| \, \sup_{x \in \Omega} \left| \nabla^c \nabla^d \left( c_{ij}(x) - c_{ij}^{\text{NN}}(x; \theta_{ij}) \right) \right| \notag\\
                                 &\leq \sum_{i,j} \| T^{(i,j)} \| \, \sup_{x \in \Omega} \left| c_{ij}(x) - c_{ij}^{\text{NN}}(x; \theta_{ij}) \right| \cdot C_{ij} \notag\\
                                 &< \sum_{i,j} \| T^{(i,j)} \| \cdot \frac{\epsilon}{C} \notag\\
                                 &= \epsilon\notag, \label{eq:error_bound}
    \end{align}
    where \( C_{ij} \) accounts for the bounds on the second-order partial derivatives \( \nabla^c \nabla^d c_{ij}(x) \) over the compact domain \( \Omega \), and the final equality holds by the choice of \( C \).
    
    Therefore, by appropriately choosing the neural networks \( \text{NN}_{\theta_{ij}} \) to approximate each scalar coefficient \( c_{ij}(x) \) within \( \epsilon/C \), we ensure that the approximated tensor \( S_\theta(x) \) satisfies
    \begin{align}
    \sup_{x \in \Omega} \| S(x) - S_\theta(x) \| < \epsilon.
    \end{align}
    
\end{enumerate}

\end{proof}

\subsection{Equivalence of Explicit Momentum Residual and Isotropic Loss Term}
\label{subsec:equivalenceisotropic}

Consider the compressible Euler momentum equation (neglecting viscous terms):
\[
\partial_t (\rho u) + \nabla \cdot \left(\rho u \otimes u + pI\right) = 0,
\]
where $\rho$ is the density, $u$ is the velocity field, and $p$ is the pressure.

In our RTNN parametrization, the network outputs a flux tensor $S$ defined as:
\[
S = 
\begin{pmatrix}
\rho & (\rho u)^T \\
\rho u & \rho u \otimes u + \sigma
\end{pmatrix},
\]
with $\rho = S_{0,0}$ and $\rho u = S_{1,0}$. The stress tensor is then defined by:
\[
\sigma = S_{1:2,1:2} - \rho\, u \otimes u.
\]

We impose a zero-deviatoric constraint on $\sigma$, meaning its deviatoric part vanishes:
\[
\sigma_{\mathrm{dev}} \triangleq \sigma - \frac{1}{n} \operatorname{tr}(\sigma)\, I = 0.
\]
This forces:
\[
\sigma = \frac{1}{n} \operatorname{tr}(\sigma)\, I \triangleq pI,
\]
With the pressure defined as:
\[
p = \frac{1}{n} \operatorname{tr}(\sigma).
\]

Substituting $\sigma = pI$ back into the flux tensor, we get:
\[
S = 
\begin{pmatrix}
\rho & (\rho u)^T \\
\rho u & \rho u \otimes u + pI
\end{pmatrix}.
\]

Our architecture enforces that $S$ is divergence-free:
\[
\nabla \cdot S = 
\begin{pmatrix}
\partial_t \rho + \nabla \cdot (\rho u) \\
\partial_t (\rho u) + \nabla \cdot (\rho u \otimes u + pI)
\end{pmatrix} = 0.
\]

The first row yields the continuity equation, which vanishes:
\[
\partial_t \rho + \nabla \cdot (\rho u) = 0.
\]

The second row yields:
\[
\partial_t (\rho u) + \nabla \cdot (\rho u \otimes u + pI) = 0,
\]

which, when expanded, yield the following equations:

\begin{align}
\partial_t \rho + \nabla \cdot (\rho u, \rho v) &= 0\\
\partial_t (\rho u) + \nabla \cdot (\rho u^2, \rho u v) &= -\partial_x p\\
\partial_t (\rho v) + \nabla \cdot (\rho u v, \rho v^2) &= -\partial_y p
\end{align}

This is exactly the system in (\ref{eq:mass_conservation})--(\ref{eq:momentum_y}).

\medskip

\noindent
 In other words, the condition on the stress tensor is numerically equivalent to having the momentum residual vanish. A similar reasoning can be applied when adding viscous terms (or magnetic-terms), for instance. 
 
Furthermore, to prove our mathematical derivation numerically, we conducted an experiment in which the velocity and pressure fields extracted from a single RTNN were trained using the classical momentum equation rather than the simplified condition on the stress tensor. In this setting, the only difference among the experiments is the underlying architecture. Although this process is more computationally expensive, owing to the necessity of computing a fourth-order derivative, the RTNN-trained fields still demonstrate similarly superior accuracy compared to those obtained with PINN and NCL. We summarize the obtained results below for the two unsteady Navier–Stokes problems: flow around a cylinder and the Beltrami flow (see Table \ref{tab:merged_l2_comparison}).

PINN and NCL were both trained using the full system (\ref{eq:mass_conservation})--(\ref{eq:energy_conservation}), which includes the conservation of mass, momentum, and energy equations. For NCL, the mass conservation equation is satisfied by construction and cancels up to machine precision; similarly, for RTNN, conservation of mass is similarly enforced structurally and cancels out.  The energy equation  (\ref{eq:energy_conservation}) is also included in the loss, similarly to the other methods.

The only apparent difference is that RTNN replaces the explicit momentum residual with an isotropic loss term. However, this isotropic loss term naturally emerges when substituting the RTNN-predicted fields into the flux-form momentum equation. Imposing the zero-deviatoric condition on the stress tensor is numerically equivalent to enforcing the momentum equation directly in its expanded form. 

\begin{table}[h]
    \centering
    \renewcommand{\arraystretch}{1.2}
    \begin{tabular}{|l|cc|cc|}
        \hline
        \multirow{2}{*}{\textbf{Method}} & \multicolumn{2}{c|}{\textbf{Beltrami}} & \multicolumn{2}{c|}{\textbf{Cylinder}} \\
        & \textbf{L2 Median} & \textbf{L2 IQR} & \textbf{L2 Median} & \textbf{L2 IQR} \\
        \hline
        RTNN (extended momentum equation) & 4.55e-04 & 8.1e-05 & 5.13e-03 & 4.2e-04 \\
        RTNN (stress tensor)              & 4.28e-04 & 5.9e-05 & 5.70e-03 & 4.3e-04 \\
        NCL                               & 1.74e-03 & 8.5e-05 & 2.54e-02 & 1.9e-03 \\
        PINN                              & 1.41e-03 & 3.1e-04 & 2.99e-02 & 7.9e-04 \\
        \hline
    \end{tabular}
    \caption{Comparison of methods across Beltrami and Cylinder tasks. Reported: median and IQR of relative L2 error.}
    \label{tab:merged_l2_comparison}
\end{table}

\section{Additional resources for Experiments}
\subsection{Euler Equation}

\label{sec:appendix_setup}

The simulation of the 2D isentropic Euler vortex is based on analytical solutions, ensuring precise modeling of vortex dynamics. Below, we provide detailed initial and boundary conditions, as well as a summary of the experimental setup parameters and hyperparameters.

\begin{table}[H]
  \caption{Setup and Hyperparameters for the Euler Vortex Experiment}
  \label{tab:setup_hyperparameters_euler}
  \centering
  \renewcommand{\arraystretch}{1.5} % Increase the row height for better readability
  \begin{tabular}{|l|c|}
    \hline
    \textbf{Parameter} & \textbf{Value} \\
    \hline
    Optimizers & BFGS \\
    Architecture & 4-layer Multilayer Perceptron, width 50 \\
    Activation Function & Tanh \\
    Domain & \(\Omega = [0, L_x] \times [0, L_y]\)  \\
    Time Interval & \([0, T]\) \\
    Collocation Points (Interior) & 1,000 \\
    Collocation Points (Boundary) & 200 \\
    Validation Points & 10,000 \\
    Evaluation Metric & Relative \( L^2 \) Error \\
    \hline
  \end{tabular}
\end{table}

\noindent
\textbf{Initial Conditions:} The initial density, velocity components, and temperature distributions are defined based on the analytical solution of the Euler vortex:
\begin{align}
    \rho(x, y, 0) &= \rho_\infty \left(\frac{T(x, y)}{T_\infty}\right)^{1/(\gamma - 1)}, \\
    u(x, y, 0) &= u_\infty - \frac{\beta}{2\pi} (y - y_c) \exp\big[1 - r^2(x, y)\big], \\
    v(x, y, 0) &= v_\infty + \frac{\beta}{2\pi} (x - x_c) \exp\big[1 - r^2(x, y)\big], \\
    T(x, y) &= T_\infty - \frac{(\gamma - 1) \beta^2}{8\pi^2} \exp\big[2(1 - r^2(x, y))\big],
\end{align}
where \( r^2(x, y) = (x - x_c)^2 + (y - y_c)^2 \) represents the radial distance from the vortex center \((x_c, y_c)\).

\noindent
\textbf{Boundary Conditions:} On the boundaries of the spatial domain \( \partial \Omega \), we impose fixed values for density, velocity, and pressure:
\[
\rho|_{\partial \Omega} = \rho_\infty, \quad (u, v)|_{\partial \Omega} = (u_\infty, v_\infty), \quad p|_{\partial \Omega} = \kappa \rho_\infty^\gamma.
\]

\subsection{Beltrami Flow}
\label{sec:apbeltrami}

\begin{table}[H]
  \caption{Setup and Hyperparameters for the Beltrami Flow Experiment}
  \label{tab:setup_hyperparameters_beltrami}
  \centering
  \renewcommand{\arraystretch}{1.5} % Increase the row height
  \begin{tabular}{lc}
    \toprule
    \textbf{Parameter} & \textbf{Value} \\
    \midrule
    Optimizers &  BFGS \\
    Architecture & 4-layer Multilayer Perceptron, width 50 \\
    Activation Function & Tanh \\
    Domain & \(\Omega = [-1, 1] \times [-1, 1] \times [-1, 1]\)  \\
    Time Interval & \([0, 1]\) \\
    Collocation Points (Interior) & 10,000 \\
    Collocation Points (Boundary) & 961 per face \\
    Validation Points & 10,000 \\
    Evaluation Metric & Relative $L^2$ Error \\
    \bottomrule
  \end{tabular}
\end{table}

We consider the unsteady three-dimensional Beltrami flow originally described by \citet{Ethier1994ExactF3} at \(\mathrm{Re} = 1\). The spatial domain is \(\Omega = [-1, 1]^3\) and time ranges over \([0, 1]\). The exact solutions are:
\begin{align*}
    u(x, y, z, t) &= -e^{x} \sin(y + z) + e^{z} \cos(x + y)\, e^{-t}, \\
    v(x, y, z, t) &= -e^{y} \sin(z + x) + e^{x} \cos(y + z)\, e^{-t}, \\
    w(x, y, z, t) &= -e^{z} \sin(x + y) + e^{y} \cos(z + x)\, e^{-t},
\end{align*}
and
\begin{align*}
    p(x, y, z, t) &= -\tfrac{1}{2} \bigl[e^{2x} + e^{2y} + e^{2z} 
        + 2 \sin(x + y)\,\cos(z + x)\,e^{y+z} \\
      &\hspace{1.3cm} + 2 \sin(y + z)\,\cos(x + y)\,e^{z+x} 
        + 2 \sin(z + x)\,\cos(y + z)\,e^{x+y}\bigr]\, e^{-2t}.
\end{align*}
These expressions satisfy the incompressible Navier--Stokes equations exactly, making the Beltrami flow an ideal test for verifying numerical methods. We use 10,000 interior collocation points to enforce the PDE residual and 961 boundary/initial points per face for constraints; an additional 10,000 points in the interior serve as a validation set.
We report the training dynamics in Figure ~\ref{sec:apbeltrami}. 

\begin{figure}[t]
    \centering
    \includegraphics[width=0.48\textwidth]{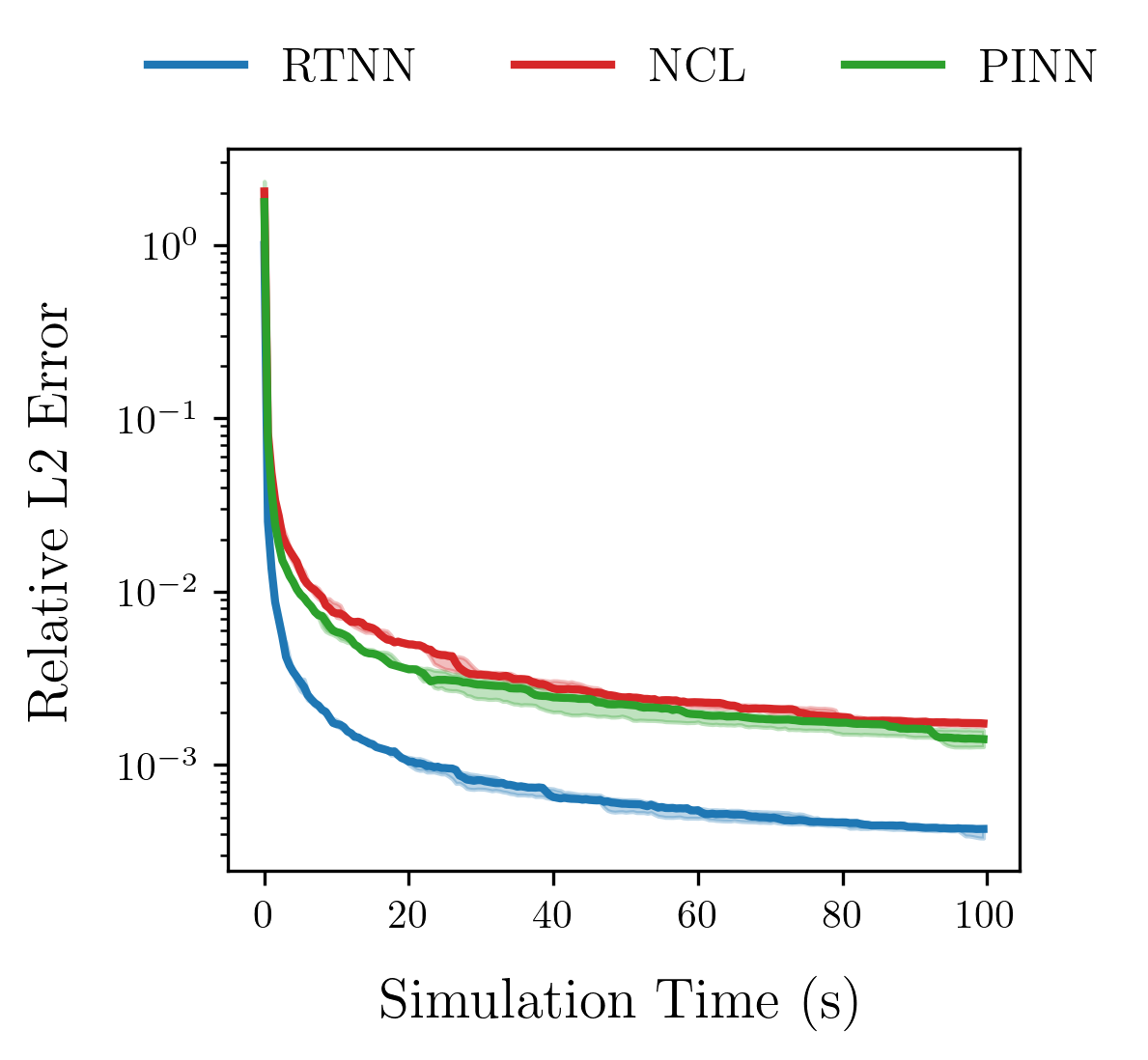}
    \caption{Comparison of training relative  \(L^2\) error over simulation time for the 3d Beltrami experiment across all methods. 
    The solid lines represent the median relative  \(L^2\) error, while the shaded regions indicate the interquartile range (IQR).
    RTNN achieves the lowest L2 error, followed by PINN and NCL. The x-axis represents the simulation time (in seconds), while the y-axis shows the relative  \(L^2\) error on a logarithmic scale.}
    \label{fig:beltramu}
\end{figure}

\subsection{Cylinder Wake}
\label{sec:apcylinder_wake}

\begin{table}[H]
  \caption{Setup and Hyperparameters for the Cylinder Wake Experiment}
  \label{tab:setup_hyperparameters_cylinder}
  \centering
  \renewcommand{\arraystretch}{1.5} % Increase the row height
  \begin{tabular}{lc}
    \toprule
    \textbf{Parameter} & \textbf{Value} \\
    \midrule
    Optimizers & BFGS \\
    Architecture & 4-layer Multilayer Perceptron, width 50 \\
    Activation Function & Tanh \\
    Domain & \(\Omega = [1, 8] \times [-2, 2]\) \\
    Time Interval & \([0, 7]\), \(\Delta t = 0.1\) \\
    Interior Collocation Points & 40,000 \\
    Boundary/Initial Points & 5,000 \\
    Evaluation Metric & Relative $L^2$ Error \\
    \bottomrule
  \end{tabular}
\end{table}

For the two-dimensional cylinder wake at \(\mathrm{Re}=100\), we consider a circular cylinder centered at \((0,0)\). The downstream domain is \(\Omega = [1,8]\times[-2,2]\) in space, with the simulation time interval \([0, 7]\) discretized in increments of \(\Delta t = 0.1\). We train a 4-layer MLP (width 50, Tanh activations) on 40,000 interior collocation points and 5,000 boundary/initial points, \emph{without} using labeled data. We validate the solution against direct numerical simulation (DNS) from \citet{raissi2019}.  We report the training dynamics in Figure~\ref{fig:cylinder}

\begin{figure}[t]
    \centering
    \includegraphics[width=0.48\textwidth]{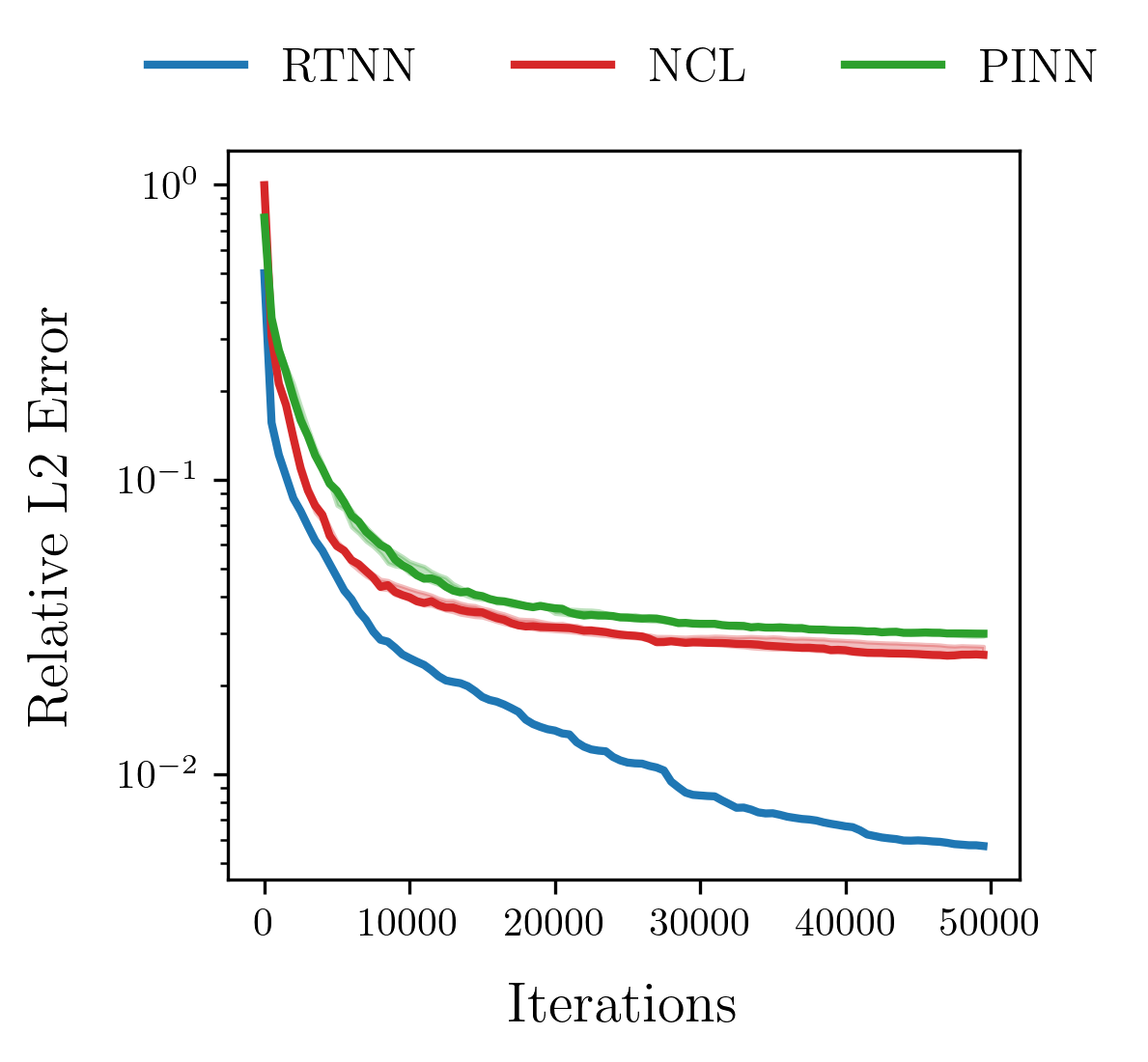}
    \caption{Comparison of training relative  \(L^2\) error over simulation time for the Cylinder wake experiment across all methods. 
    The solid lines represent the median relative  \(L^2\) error, while the shaded regions indicate the interquartile range (IQR).
    RTNN achieves the lowest L2 error, followed by NCL and pinn . The x-axis represents the number of iterations, while the y-axis shows the relative  \(L^2\) error on a logarithmic scale.}
    \label{fig:cylinder}
\end{figure}

\subsection{NACA Airfoil}
\label{sec:apnaca_airfoil}

\begin{figure}[t]
    \centering
    \includegraphics[width=0.48\textwidth]{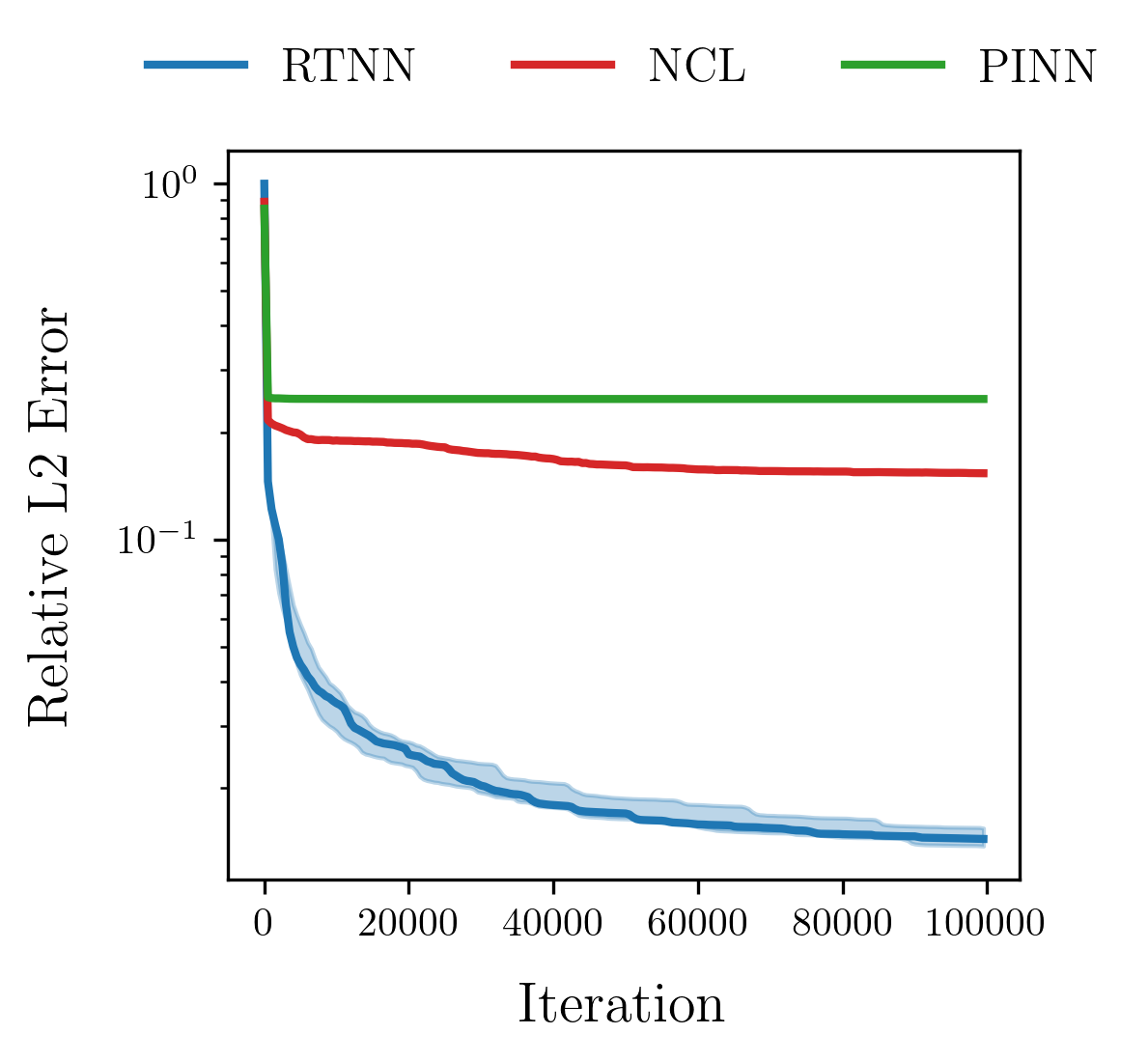}
    \caption{Comparison of training relative  \(L^2\) error over simulation time for the NACA Airfoil experiment across all methods. 
    The solid lines represent the median relative  \(L^2\) error, while the shaded regions indicate the interquartile range (IQR).
    RTNN achieves the lowest L2 error, followed by PINN and NCL. The x-axis represents the simulation time (in seconds), while the y-axis shows the relative  \(L^2\) error on a logarithmic scale.}
    \label{fig:airfoil}
\end{figure}

\begin{table}[H]
  \caption{Setup and Hyperparameters for the NACA Airfoil Experiment}
  \label{tab:setup_hyperparameters_naca}
  \centering
  \renewcommand{\arraystretch}{1.5} % Increase the row height
  \begin{tabular}{lc}
    \toprule
    \textbf{Parameter} & \textbf{Value} \\
    \midrule
    Optimizers & BFGS \\
    Architecture & 4-layer Multilayer Perceptron, width 50 \\
    Activation Function & Tanh \\
    Evaluation Metric & Relative $L^2$ Error \\
    \bottomrule
  \end{tabular}
\end{table}

We investigate the steady laminar flow at \(\mathrm{Re} = 1000\) around a NACA 0012 airfoil. This problem is recast as a steady Navier--Stokes system by treating time as a dummy dimension. An MLP with 4 hidden layers (width 50, Tanh activations) is trained \emph{with} labeled data from our in-house RANS solver with 2'000 data points and 40'000 collocation points for PDE residuals. We use  12'000  data points to validate the recovered fields.  We report the training dynamics in Figure~\ref{fig:airfoil}

\subsection{MHD}\label{ap:mhd}

We train and validate our method on a three-dimensional \emph{periodic} incompressible Magnetohydrodynamics (MHD) flow within the domain \([0, L]^3\), with Reynolds numbers \(\mathrm{Re} = 1000\) and \(\mathrm{ReM} = 1000\). The simulation covers the time interval \(t \in [0, 0.5]\) for training and tests at \(t = 0.55\). Training data is generated using a spectral solver, ensuring high accuracy, with initial velocity \(\mathbf{u}_0\) and magnetic field \(\mathbf{B}_0\) sampled from Gaussian random fields. Periodic boundary conditions are strictly enforced to maintain physical consistency across the domain. We employ a \emph{Separable Physics-Informed Neural Network} (SPINN) \cite{cho2023separablephysicsinformedneuralnetworks} with \textbf{5 hidden layers} and \textbf{500 neurons per layer} to handle the structured 3D grid, discretized into \textbf{101} × \textbf{128} × \textbf{128} points along each respective dimension. The model is trained using \textbf{50,000 iterations} of the L-BFGS optimizer to effectively minimize the loss function. Validation at \(t = 0.55\) utilizes spectral solver data to assess performance and ensure accurate replication of MHD flow characteristics. We compare three approaches: \textbf{(i)} our RTNN-based method, \textbf{(ii)} a SPINN baseline with penalized residuals, and \textbf{(iii)} a Curl-SPINN parametrizing velocity and magnetic fields as the curl of a scalar potential. Additional details and performance plots are provided below.

\begin{table}[H]
  \caption{Setup and Hyperparameters for the MHD Experiment}
  \label{tab:setup_hyperparameters_mhd}
  \centering
  \renewcommand{\arraystretch}{1.5} % Increase row height for better readability
  \begin{tabular}{|l|c|}
    \hline
    \textbf{Parameter} & \textbf{Value} \\
    \hline
    \textbf{Optimizer} & L-BFGS \\
    \textbf{Architecture} & Separable Physics-Informed Neural Network (SPINN) \\
    \textbf{Hidden Layers} & 5 \\
    \textbf{Neurons per Layer} & 500 \\
    \textbf{Activation Function} & Tanh \\
    \textbf{Domain} & \([0, L]^3\) \\
    \textbf{Time Interval} & Training: \(t \in [0, 0.5]\), Testing: \(t = 0.55\) \\
    \textbf{Grid Discretization} & \(101 \times 128 \times 128\) points \\
    \textbf{Reynolds Number} & \(\mathrm{Re} = 1000\) \\
    \textbf{Magnetic Reynolds Number} & \(\mathrm{ReM} = 1000\) \\
    \textbf{Training Iterations} & 50,000 \\
    \textbf{Validation Data} & Spectral solver data at \(t = 0.55\) \\
    \textbf{Evaluation Metric} & Relative \(L^2\) Error \\
    \hline
  \end{tabular}
\end{table}

\begin{figure}[t]
    \centering
    \includegraphics[width=\textwidth]{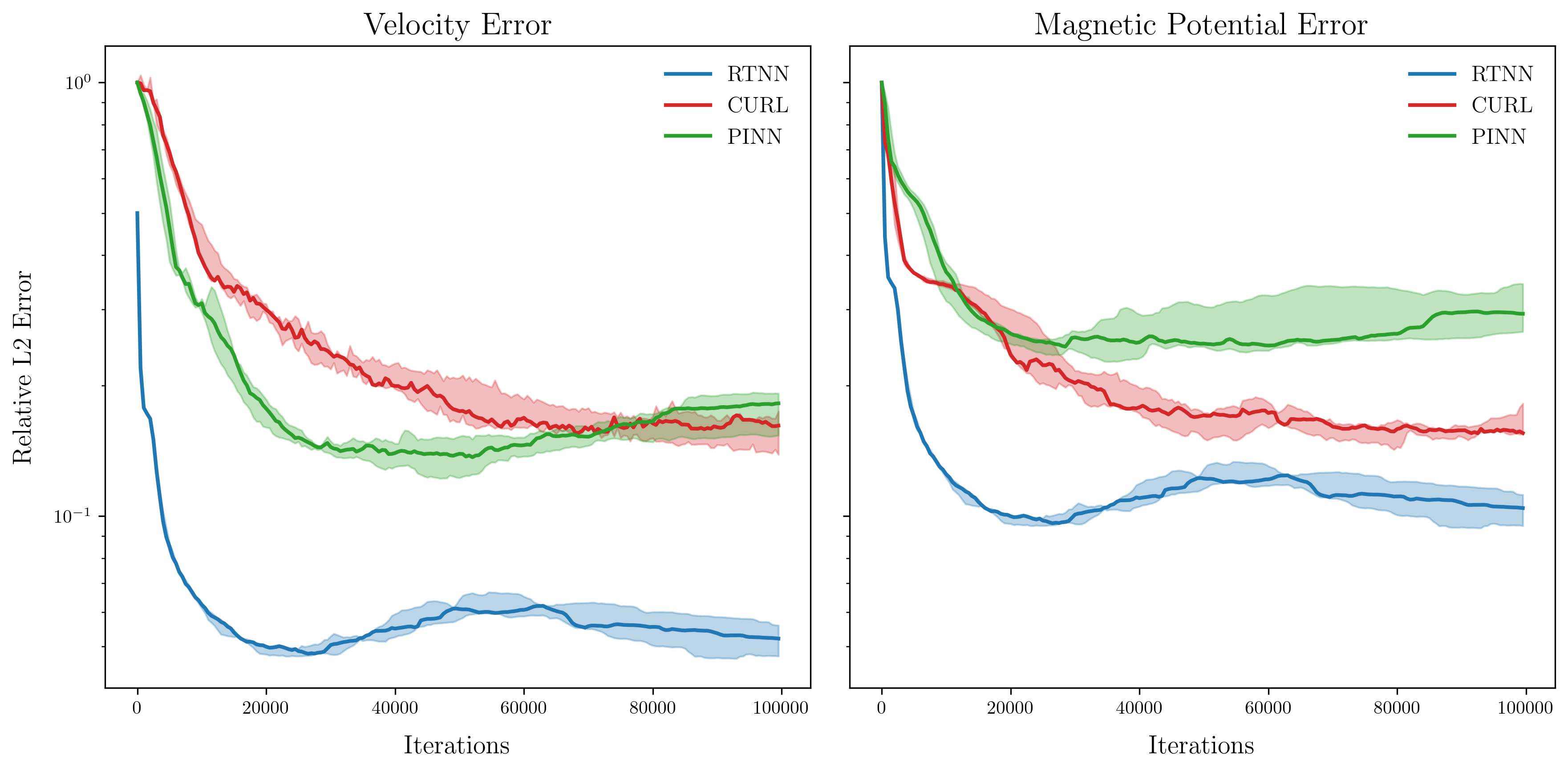} 
    \caption{Comparison of training relative \(L^2\) error over iterations for the MHD experiment across all methods. The left subplot represents the velocity relative \(L^2\) error, while the right subplot represents the magnetic potential relative \(L^2\) error. The solid lines indicate the median relative \(L^2\) error, and the shaded regions depict the interquartile range (IQR). RTNN achieves the lowest \(L^2\) error in both velocity and magnetic potential, followed by PINN and NCL. The x-axis represents the iterations, while the y-axis shows the relative \(L^2\) error on a logarithmic scale.}
    \label{fig:mhd_results}
\end{figure}

\end{document}